\newcommand{\FNO}{\textsc{\small{FNO}}\xspace}
\newcommand{\MFN}{\textsc{\small{MFN}}\xspace}
\newcommand{\alg}{\textsc{\small{PINO-PC}}\xspace}
\newcommand{\modelname}{PINO-PC\xspace}
\newcommand{\DDPG}{\textsc{\small{DDPG}}\xspace}
\newcommand{\DNSPC}{\textsc{\small{DNS-PC}}\xspace}
\newcommand{\MPCNN}{\textsc{\small{MP-CNN}}\xspace}
\newcommand{\observername}{$M_O$}
\newcommand{\policyname}{$M_p$}
\newcommand{\re}{\operatorname{Re}}
\newcommand{\dui}{\CheckmarkBold}
\newcommand{\budui}{\XSolidBrush}
\newtheorem{theorem}{Theorem}
\newcommand{\zelinrevision}[1]{{} #1}
\begin{document}

\title[Physics-informed Neural-operator Predictive Control for Drag Reduction in Turbulent Flows]{Physics-informed Neural-operator Predictive Control for Drag Reduction in Turbulent Flows}


\author[1, 6]{\fnm{Zelin} \sur{Zhao}}

\author[1]{\fnm{Zongyi} \sur{Li}}

\author[1]{\fnm{Kimia} \sur{Hassibi}}

\author[2]{\fnm{Kamyar} \sur{Azizzadenesheli}}

\author[3]{\fnm{Junchi} \sur{Yan}}

\author[4]{\fnm{H. Jane} \sur{Bae}}

\author[4,5]{\fnm{Di} \sur{Zhou}}

\author[1]{\fnm{Anima} \sur{Anandkumar}}

\affil[1]{\orgdiv{Department of Computing and Mathematical Sciences}, \orgname{California Institute of Technology}, \orgaddress{\city{Pasadena}, \postcode{91125}, \state{CA}, \country{USA}}}

\affil[2]{\orgname{NVIDIA}, \orgaddress{\city{Pasadena}, \postcode{91125}, \state{CA}, \country{USA}}}

\affil[3]{\orgdiv{Department of Computer Science and Engineering}, \orgname{Shanghai Jiao Tong University}, \orgaddress{\postcode{200240}, \state{Shanghai}, \country{China}}}

\affil[4]{\orgdiv{Graduate Aerospace Laboratories}, \orgname{California Institute of Technology}, \orgaddress{\city{Pasadena}, \postcode{91125}, \state{CA}, \country{USA}}}

\affil[5]{\orgdiv{Department of Mechanical and Aerospace Engineering}, \orgname{University of Tennessee}, \orgaddress{\city{Knoxville}, \postcode{37996}, \state{TN}, \country{USA}}}

\affil[6]{\orgdiv{Daniel Guggenheim School of Aerospace Engineering}, \orgname{Georgia Institute of Technology}, \orgaddress{\city{Atlanta}, \postcode{30332}, \state{GA}, \country{USA}}}


\abstract{Assessing turbulence control effects for wall friction numerically is a significant challenge since it requires expensive simulations of turbulent fluid dynamics. We instead propose an efficient deep reinforcement learning (RL) framework for modeling and control of turbulent flows. It is model-based RL for predictive control (PC), where both the policy and the observer models for turbulence control are learned jointly using Physics Informed Neural Operators (PINO), which are discretization invariant and can capture fine scales in turbulent flows accurately. Our PINO-PC outperforms prior model-free reinforcement learning methods in various challenging scenarios where the flows are of high Reynolds numbers and unseen, i.e., not provided during model training. We find that PINO-PC achieves a drag reduction of 39.0\% under a bulk-velocity Reynolds number of 15,000, outperforming previous fluid control methods by more than 32\%.}

\keywords{Drag reduction, Fluid control, Neural operators, Machine-learning-based control}



\maketitle
\section{Introduction}
\label{sec-intro}

Turbulent flows are prevalent in many areas of science and engineering, such as atmospheric weather~\cite{dutton1970clearAIRTurbulence}, ocean currents~\cite{oceanTurbulence}, and blood flow in arteries~\cite{arterialTurbulence} and veins~\cite{VeinTurbulence}. 
The turbulent flow is generally more unstable when compared to laminar flow and has a higher skin friction drag, which is caused by the friction of a fluid moving against a surface or a wall. Reducing such drag and controlling turbulent flows is necessary for various applications such as aerospace engineering, fluid transport, and biomedical devices~\cite{ClosedLoopTurbulenceControl}, since it mitigates adverse effects associated with turbulence, such as increased energy consumption, reduced efficiency, and heightened mechanical stress on structures. By gaining a deeper understanding of turbulent flow dynamics and implementing effective control strategies, we can enhance performance, optimize design, and ensure the safety and reliability of various engineering systems and biological processes. 

\begin{figure}[t]
    \centering 
    \begin{subfigure}{0.47\linewidth}
    \includegraphics[width=\textwidth]{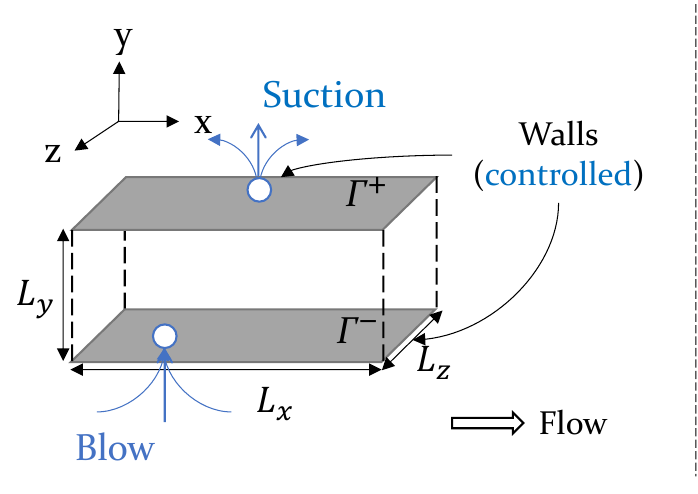}
    \vspace{-0.7cm}
    \caption{}
    \label{Fig-schematic}
    \end{subfigure}
    \begin{subfigure}{0.50\linewidth}
    \includegraphics[width=\textwidth]{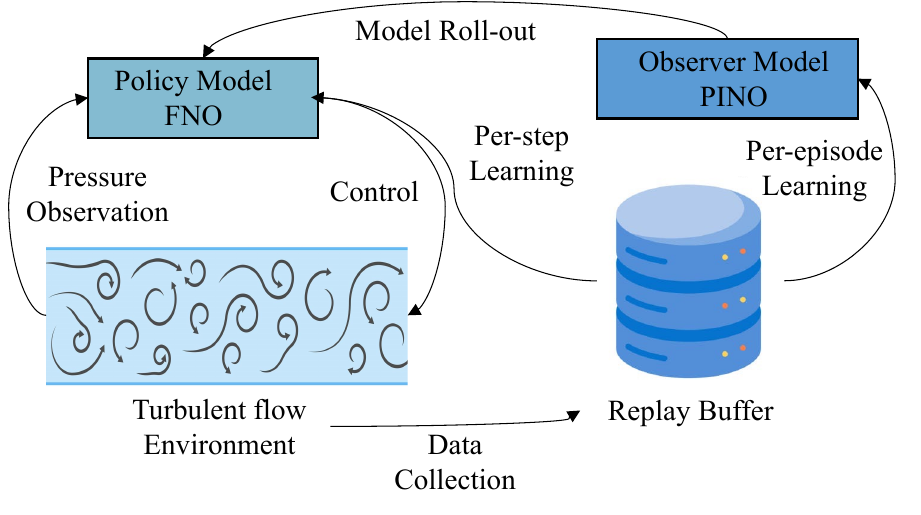}
    \vspace{-0.7cm}
    \caption{}
    \label{Fig-model-flow}
    \end{subfigure}
    \caption{\textbf{(a)}: Channel flow moving along the streamwise direction $x$, with the control applied at the wall via suction or blowing. \textbf{(b)}: Overall schematic of~\alg. \modelname consists of two neural operator components: a Physics-Informed Neural Operator (PINO~\cite{PINO}) that serves as the observer model for flow prediction, and a Fourier Neural Operator (FNO~\cite{FNO}) that acts as the policy model for control action generation. The controller takes pressure observations as input, performs predictive control, and applies the control to the turbulent flow environment. The PINO observer integrates both data and physics-based losses during training to learn accurate flow dynamics, while the FNO policy model optimizes control actions to minimize drag. Data is collected from the turbulent flow environment and is used to train the model.}
    \label{fig-scenario} 
\end{figure}

The standard approach to controlling turbulent channel flows~\cite{OppositionControl, lee1998suboptimal,DNSOptimalBenchmark,MLoppositionControl} involves blowing and suction of fluid air at chosen positions along the wall to control the boundary velocity (as shown in the left of~\Cref{Fig-schematic}). There are both passive and active control approaches. Passive approaches do not have a feedback loop for obtaining information and employing them for controlling flows~\cite{ho1982subharmonics,lowFreqUnsteadiness}, and hence, are generally inferior in drag reduction compared to active control methods~\cite{OppositionControl,lee1998suboptimal,DNSOptimalBenchmark}, which have a feedback loop and dynamically reduce skin friction through suction and blowing at the wall. 

A simple active control method in wall-bounded flows, known as ``opposition control''~\cite{OppositionControl}, proposes applying blowing and suction at the wall, opposite to the normal velocity at a plane off the wall called a ``detection plane''.  However, such an approach is  myopic and not an optimal control policy. Even when it achieves a reasonable drag reduction, it requires placement of sensors at the detection plane, which is often impractical~\citep{lee1998suboptimal, MLoppositionControl}. To overcome this, a machine-learning-based approach \MPCNN is introduced~\citep{MLoppositionControl} to predict the velocity function on the detection plane based on boundary information (e.g., boundary pressure function). This allows for sensor-free implementation by replacing direct velocity measurements with learned estimates. However, this method still operates within the framework of opposition control, meaning it does not fundamentally optimize the control policy but rather seeks to replicate an existing heuristic approach. Additionally, it often assumes the availability of a well-defined state-space dynamical model for controller synthesis, which may not always be practical or generalizable across different flow conditions.  

To address the above performance limitation of opposition control, reinforcement learning (RL) methods are developed that control the flow to achieve drag reduction~\cite{DRLControlPDE}. \zelinrevision{In particular, deep deterministic policy gradient (\DDPG)~\cite{DDPG} has been employed to control flows~\cite{sonoda2023reinforcement}, achieving superior drag reduction compared to opposition control~\citep{DeepRLchannel}.} It controls the flow by changing the mass flow rates of two jets on the sides of a cylinder. ~\citet{DRLFlow} and~\citet{RLBluffBodySim} use RL methods to control the cylinder or bluff body flow. \citet{tang2020robustReynolds} proposes a smoothing technique to reduce the drag fluctuations while enabling the RL agent to generalize to unseen Reynolds numbers. Recently, \citet{LearnTwoDControl} proposes to transfer discovered two-dimensional controls to three-dimensional cylinder flows via reinforcement learning. \zelinrevision{However, these RL approaches often have large variances~\citep{DeepRLchannel} and have inferior performance when the flow is of a high turbulent level (at a high Reynolds number)~\citep{lale2021model}.  This is due to several reasons such as using a model-free approach, assuming a fixed discretization and full observability of the dynamics, and not incorporating knowledge of physics that can cause unstable behavior in turbulent conditions. 
Our work overcomes these limitations by accurately modeling the fluid flow and dynamically controlling the turbulent conditions in an online manner.}



{\bf Our approach:} \zelinrevision{We propose physics-informed neural operator predictive control~(\alg),  a model-based deep reinforcement learning framework for drag reduction. It consists of two main components, the observer model and the policy model, as illustrated in~\Cref{Fig-model-flow}. The observer model predicts the control outcome, i.e., internal field velocity, based on the control, while the policy model is used to predict the control, which is the applied boundary velocity, based on the boundary pressure. \alg proceeds in multiple episodes. During each episode of \alg, the observer and policy models, learned so far, are kept fixed, and applied to interactively collect observations (pressure, velocity, and drag) from the flow environment.  These observations are stored in memory, known as the replay buffer in RL literature~\cite{dqn}.  During learning, the observations are retrieved from memory and used to update the observer model. The observer model is then kept fixed, and the policy model is updated. Note that our observer model is not fixed throughout all episodes of training the policy model, and hence, it incorporates different dynamics. Our observer can learn from the collected experiences of different controls, because it retains memory from prior episodes in the replay buffer.}

We consider learning in function spaces, while prior approaches RL approaches for drag reduction, assume a fixed discretization of pressure, velocity. In fluid dynamics, it is crucial to capture fine-scale features and high-resolution details to accurately predict fluid behavior~\cite{DNSOptimalBenchmark}. Recently, neural operators have been proposed for learning accurate fluid flow models in function spaces~\cite{azizzadenesheli2024neural}. Neural operators are an extension of standard neural networks and are discretization invariant, meaning they are not limited to one discretization or resolution. They can accurately approximate the solution operators of PDEs, such as fluid flow equations. The Fourier Neural Operator (FNO~\cite{FNO}) is a specific type of neural operator that leverages Fourier transforms to efficiently capture global dependencies in the solution space, making it particularly well-suited for fluid dynamics applications. Further, physics-informed neural operators (PINO) integrate training data with knowledge of physics into operator learning, such as equations as additional training supervision~\cite{PINO}. This reduces reliance on training data, which is crucial when data is scarce, and enables generalization to flows of unseen Reynolds numbers. In \alg, The observer model is trained under the PINO framework to minimize a combination of data and PDE losses, while the policy model is trained using the FNO model to minimize the control loss, which is kinetic energy and actuation norms on the trained observer predictive model and control cost function~\cite{lee1998suboptimal}.

{\bf Summary of empirical results: } \zelinrevision{Our numerical experiments show that \alg demonstrates a better drag reduction compared to previous machine-learning and reinforcement-learning approaches as well as traditional control methods that do not involve learning}. It achieves a $43.5\%$ drag reduction for flows with Reynolds numbers not included in the training data, which represents an improvement of 26.5\% in drag reduction when compared to prior learning approaches such as \MPCNN and 9.0\% when compared with \DDPG, a model free baseline~\cite{DDPG}. Our approach also outperforms control methods that do not involve learning, such as opposition control~\cite{OppositionControl} by $24.9\%$ and optimal control ~\citep{lee1998suboptimal,DNSOptimalBenchmark} by $9.6\%$ .  Furthermore, \alg leverages physics-informed learning, which boosts its generalization to unseen flows. The experimental results show that the generalization performance can improve drag reduction performance up to $2.2\%$ when using physics-informed learning compared to \alg without physics-informed learning.


%

\zelinrevision{Since \alg is the first model-based RL method for drag reduction, it has better generalization capabilities to new unseen environments, compared to model-free RL approaches proposed earlier. Further, since our online learning is physics-informed and incorporates PDE constraints, it can more easily generalize to new conditions such as fluid flows with new Reynolds numbers, especially high Reynolds numbers with highly turbulent dynamics, where control becomes harder, and prior RL methods fail. Fluid flows with different Reynolds numbers indeed have shared features at multiple scales that help with generalization to unseen scenarios. Even then, adapting the control to unseen Reynolds numbers, especially higher Reynolds numbers, is challenging due to increased nonlinear interactions.  Our method works effectively even under this challenging setting since it can adapt online to unseen scenarios, since it is physics-informed,  while also utilizing the shared features from its earlier training due to operator learning. Such transfer learning across different Reynolds numbers can be further enhanced by explicitly incorporating relationships across different scales, which is of interest for further investigation.}

Thus, our approach has demonstrated superior accuracy and drag reduction compared to alternative machine-learning methods. Notably, \alg achieves a remarkable 43.5\% drag reduction for Reynolds numbers not included in the training data, surpassing both opposition control and the optimal control baseline. The proposed iterative learning procedure, with extensive observer and policy learning, proves effective in achieving more robust turbulence control. This work provides a foundation for more efficient and practical turbulence control methodologies.

\section{Results}
We perform the direct numerical simulation (DNS) of a turbulent channel flow~\cite{lee1998suboptimal, OppositionControl, MLoppositionControl,DeepRLchannel}. The schematic of the channel flow is presented in~\Cref{Fig-schematic}. The simulations are based on discretizing the incompressible Navier-Stokes equations, while the equations are solved with an explicit third-order Runge Kutta (RK3) method for time advancement. The control is deployed by applying a normal velocity at the wall, while the control target is to minimize the drag. More details of the problem setting can be found in~\Cref{sec-problem-setting}.

\zelinrevision{We experiment with flows of various Reynolds numbers to show the drag reduction results of different approaches, where settings are detailed in the first row of~\Cref{tab-min-channel-data}. In the first setup (first two columns), the training and testing bulk-velocity Reynolds number is $\re_b\approx3k$. In this scenario, the friction Reynolds number is around $\re_\tau\approx178$, which is close to the default setup of previous studies~\cite{OppositionControl, DeepRLchannel, MLoppositionControl, bae2021nonlinear}. We then conduct various experiments on other choices of Reynolds numbers to test the generalization performance of different machine learning models. To reduce the effect of noises and randomness, we conduct each experiment with three different flow initializations and show mean and standard errors in tables and curve plots. We use different flow initializations in training and testing to assess the generalization performance of machine learning models.}

\begin{figure}
    \centering 
    \includegraphics[width=0.90\textwidth]{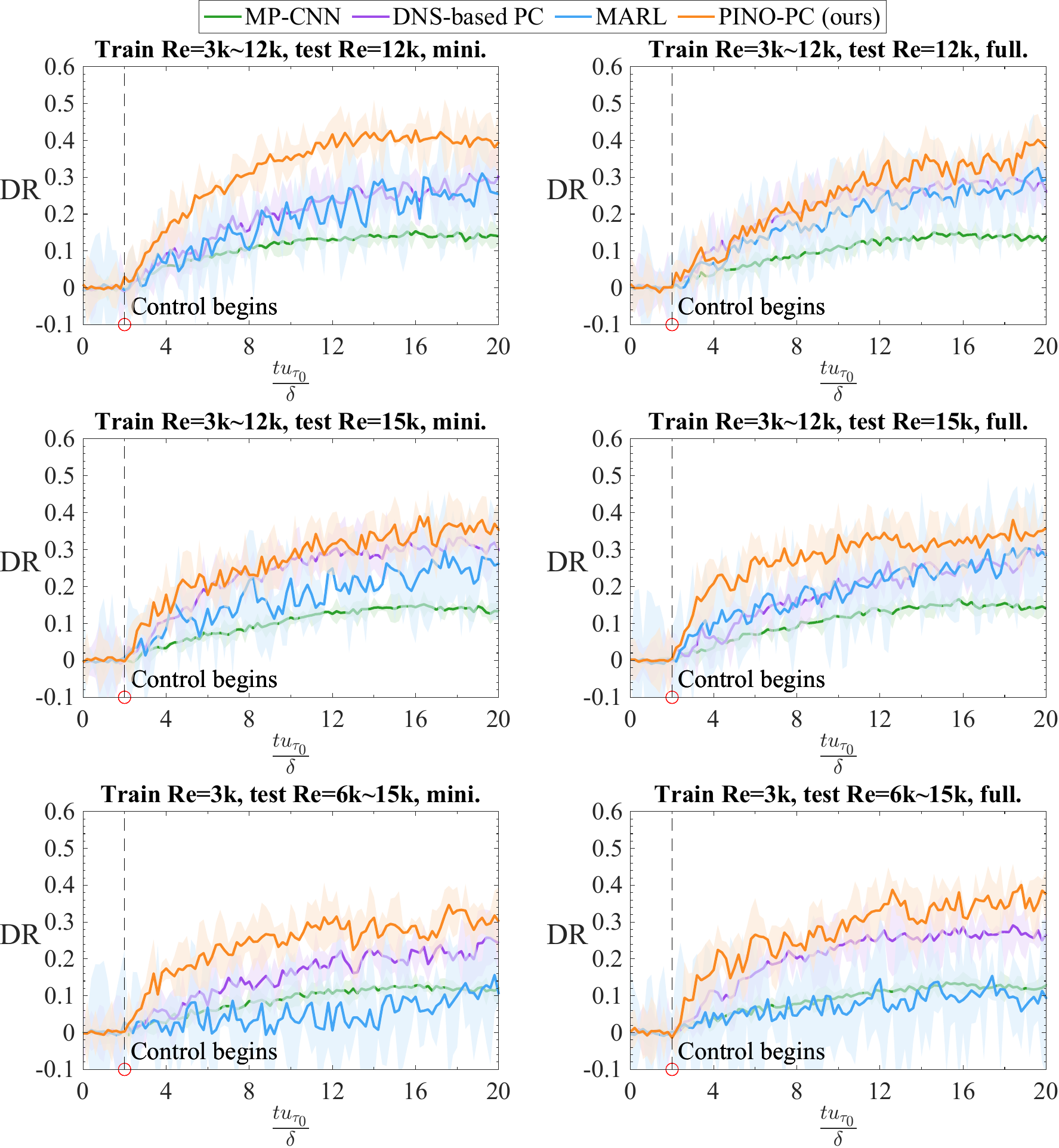}
    \caption{Drag reduction curves comparing several flow control methods: \MPCNN~\cite{MLoppositionControl}, Local suboptimal~\cite{lee1998suboptimal}, \DDPG~\cite{DeepRLchannel} and~\alg (ours). The $x$-axis denotes the non-dimensional timestep, and the $y$-axis denotes the drag reduction rate (DR) related to the uncontrolled case. The control beginning time is indicated via a red circle.}
    \label{fig-compare-curve}
\end{figure}

\begin{table*}
\def~{\hphantom{0}}
\resizebox{\columnwidth}{!}{%
\begin{tabular}{c|cc|cc|cc|cc}
\toprule
$\re_b$                          & \multicolumn{1}{c|}{3k}                                & 3k                                & \multicolumn{1}{c|}{3k, 6k, 9k, 15k}                   & 12k                               & \multicolumn{1}{c|}{3k, 6k, 9k, 12k}                   & 15k                               & \multicolumn{1}{c|}{3k}                                & 6k, 9k, 12k, 15k                  \\ \midrule
Phase                                          & \multicolumn{1}{c|}{Training}                          & Testing                           & \multicolumn{1}{c|}{Training}                          & Testing                           & \multicolumn{1}{c|}{Training}                          & Testing                           & \multicolumn{1}{c|}{Training}                          & Testing                           \\ \midrule
Opposition control~\citep{OppositionControl} & \multicolumn{1}{c|}{-}                                 & $17.2\pm\scriptstyle 1.5$         & \multicolumn{1}{c|}{-}                                 & $16.3\pm\scriptstyle 1.2$         & \multicolumn{1}{c|}{-}                                 & $16.6\pm\scriptstyle 2.1$         & \multicolumn{1}{c|}{-}                                 & $15.1\pm\scriptstyle 1.5$         \\ \midrule
\DNSPC~\cite{DNSOptimalBenchmark}        & \multicolumn{1}{c|}{-}                                 & $38.1\pm\scriptstyle 3.5$         & \multicolumn{1}{c|}{-}                                 & $31.1\pm\scriptstyle 4.8$         & \multicolumn{1}{c|}{-}                                 & $33.1\pm\scriptstyle 4.7$         & \multicolumn{1}{c|}{-}                                 & $25.7\pm\scriptstyle 3.8$         \\ \midrule
\MPCNN~\citep{MLoppositionControl}             & \multicolumn{1}{c|}{$15.3\pm\scriptstyle2.5$}          & $15.1\pm\scriptstyle2.6$          & \multicolumn{1}{c|}{$16.1\pm\scriptstyle3.1$}          & $14.2\pm\scriptstyle 1.4$         & \multicolumn{1}{c|}{$15.4\pm\scriptstyle 3.4$}         & $15.4\pm\scriptstyle 1.6$         & \multicolumn{1}{c|}{$15.3\pm\scriptstyle 2.5$}         & $13.1\pm\scriptstyle 1.2$         \\ \midrule
\DDPG~\cite{DeepRLchannel}                      & \multicolumn{1}{c|}{$41.2\pm\scriptstyle7.1$}          & $40.5\pm\scriptstyle7.8$          & \multicolumn{1}{c|}{$36.1\pm\scriptstyle5.6$}          & $32.1\pm\scriptstyle7.3$          & \multicolumn{1}{c|}{$44.2\pm\scriptstyle7.9$}          & $31.4\pm\scriptstyle7.7$          & \multicolumn{1}{c|}{$41.2\pm\scriptstyle7.1$}          & $14.6\pm\scriptstyle9.3$          \\ \midrule
\alg                                     & \multicolumn{1}{c|}{$\textbf{45.1}\pm\scriptstyle6.7$} & $\textbf{45.5}\pm\scriptstyle5.4$ & \multicolumn{1}{c|}{$\textbf{45.3}\pm\scriptstyle4.0$} & $\textbf{41.2}\pm\scriptstyle4.6$ & \multicolumn{1}{c|}{$\textbf{42.1}\pm\scriptstyle5.7$} & $\textbf{38.5}\pm\scriptstyle6.4$ & \multicolumn{1}{c|}{$\textbf{45.1}\pm\scriptstyle6.7$} & $\textbf{33.5}\pm\scriptstyle5.5$ \\ \bottomrule
\end{tabular}}
  \caption{Performances in varied Reynolds numbers comparing several flow control methods in the \textbf{minimum channel flow} case. The metric is drag reduction rate (DR) in percentage. Each experiment is repeated three times while we report both mean and variance in this table. Opposition control~\cite{OppositionControl} and \DNSPC~\cite{DNSOptimalBenchmark} do not have training performance scores because they are not machine-learning-based methods. \zelinrevision{We experimented with different generalization settings, where corresponding Reynolds numbers are presented in the first row.}}
  \label{tab-min-channel-data}
\end{table*}

\begin{table*}
\def~{\hphantom{0}}
\resizebox{\columnwidth}{!}{%
\begin{tabular}{c|cc|cc|cc|cc}
\toprule
$\re_b$                          & \multicolumn{1}{c|}{3k}                                 & 3k                                 & \multicolumn{1}{c|}{3k, 6k, 9k, 15k}          & 12k                               & \multicolumn{1}{c|}{3k, 6k, 9k, 12k}           & 15k                               & \multicolumn{1}{c|}{3k}                        & 6k, 9k, 12k, 15k                  \\ \midrule
Phase                                          & \multicolumn{1}{c|}{Training}                           & Testing                            & \multicolumn{1}{c|}{Training}                 & Testing                           & \multicolumn{1}{c|}{Training}                  & Testing                           & \multicolumn{1}{c|}{Training}                  & Testing                           \\ \midrule
Opposition control~\citep{OppositionControl} & \multicolumn{1}{c|}{-}                                  & $17.4\pm\scriptstyle 1.4$          & \multicolumn{1}{c|}{-}                        & $15.2\pm\scriptstyle 1.9$         & \multicolumn{1}{c|}{-}                         & $15.8\pm\scriptstyle 2.9$         & \multicolumn{1}{c|}{-}                         & $14.1\pm\scriptstyle 1.9$         \\ \midrule
\DNSPC~\cite{DNSOptimalBenchmark}        & \multicolumn{1}{c|}{-}                                  & $40.3\pm\scriptstyle 3.4$          & \multicolumn{1}{c|}{-}                        & $30.2\pm\scriptstyle 5.4$         & \multicolumn{1}{c|}{-}                         & $30.5\pm\scriptstyle 4.1$         & \multicolumn{1}{c|}{-}                         & $29.4\pm\scriptstyle 3.9$         \\ \midrule
\MPCNN~\citep{MLoppositionControl}             & \multicolumn{1}{c|}{$15.8\pm\scriptstyle 2.3$}          & $15.6\pm\scriptstyle 2.4$          & \multicolumn{1}{c|}{$18.4\pm\scriptstyle3.2$} & $15.2\pm\scriptstyle 1.6$         & \multicolumn{1}{c|}{$15.9\pm\scriptstyle 3.5$} & $16.1\pm\scriptstyle 1.4$         & \multicolumn{1}{c|}{$15.8\pm\scriptstyle 2.3$} & $13.4\pm\scriptstyle 2.0$         \\ \midrule
\DDPG~\cite{DDPG}                      & \multicolumn{1}{c|}{$34.1\pm\scriptstyle 6.9$}          & $33.1\pm\scriptstyle 5.9$          & \multicolumn{1}{c|}{$36.2\pm\scriptstyle5.2$} & $31.4\pm\scriptstyle7.0$          & \multicolumn{1}{c|}{$38.2\pm\scriptstyle8.1$}  & $32.5\pm\scriptstyle7.9$          & \multicolumn{1}{c|}{$34.1\pm\scriptstyle6.9$}  & $14.6\pm\scriptstyle9.1$          \\ \midrule
\alg                                     & \multicolumn{1}{c|}{$\textbf{43.5}\pm\scriptstyle 3.9$} & $\textbf{42.1}\pm\scriptstyle 4.9$ & \multicolumn{1}{c|}{$\textbf{43.1}\pm\scriptstyle3.9$} & $\textbf{40.3}\pm\scriptstyle3.2$ & \multicolumn{1}{c|}{$\textbf{40.1}\pm\scriptstyle6.2$}  & $\textbf{35.1}\pm\scriptstyle5.9$ & \multicolumn{1}{c|}{$\textbf{43.5}\pm\scriptstyle3.9$}  & $\textbf{39.0}\pm\scriptstyle4.0$ \\ \bottomrule
\end{tabular}}
  \caption{Performances in varied Reynolds numbers comparing several flow control methods in the \textbf{full channel flow} case. Other setups are the same as \Cref{tab-min-channel-data}.}
  \label{tab-full-channel-data}
\end{table*}

~\Cref{tab-min-channel-data} presents numerical control results in the minimal channel. Opposition control~\cite{OppositionControl} reported a drag reduction of $\approx 14\%$. Our results indicate that opposition control reaches a drag reduction of $\approx 17.2\%$ in the single Reynolds setup, which is in agreement with their result. Opposition control~\cite{OppositionControl} has similar results in other Reynolds numbers under other settings. The machine-learning-based opposition control method called \MPCNN~\cite{MLoppositionControl} has lower drag reduction rates in all settings than the traditional opposition control~\cite{OppositionControl} because it does not perfectly imitate the opposition control policy. \DNSPC~\cite{DNSOptimalBenchmark} has a much better drag reduction than opposition control because it can access interior information and optimizes control for a period of time. Our reproduction shows that they can achieve a drag reduction of $\approx38.1\%$ in the minimum channel flow case, which is close to their report ($\approx 40\%$). \DNSPC~\cite{DNSOptimalBenchmark} has lower performances in higher Reynolds numbers. \DDPG~\cite{DDPG} is a strong baseline and performs highly in the single Reynolds number setup. However, when scaled to higher Reynolds numbers, it suffers from overfitting and cannot perform well in the test splits of generalization settings. Also, it has a larger variance than \MPCNN~\cite{MLoppositionControl}. We find \alg consistently outperforms other methods across different Reynolds numbers and generalization settings, achieving the highest mean drag reduction rates. In the test splits of challenging generalization setups, \alg performs much better than \DDPG~\cite{DDPG}, which indicates the effectiveness of the proposed physics-informed learning scheme in narrowing the generalization gap. We also observe that \alg has larger variances than \MPCNN~\cite{MLoppositionControl} because it also needs to interact with the flows during training, which increases the training variance. Further, \alg has a smaller variance than \DDPG~\cite{DDPG}, which reveals the effect of using a physics-informed observer model to lower training variance. \zelinrevision{In fluid dynamics, the dynamics at different Reynolds numbers share similar behavior at different scales~\cite {fukami2024data}. In PINO, this is exploited through PDE loss, which improves generalization. This can be further enhanced by explicitly incorporating relationships across different scales, which is left for future investigation. }

We also provide the experimental results of the full channel flow in~\Cref{tab-full-channel-data}. We observe that opposition control~\cite{OppositionControl} and \MPCNN~\cite{MLoppositionControl} achieve a similar result in full channel flow compared to the minimum channel case. This is because opposition-control-based methods are not affected much by scales~\cite{MLoppositionControl}, and they usually have a smaller actuation intensity~\cite{DeepRLchannel}. \DNSPC~\cite{DNSOptimalBenchmark} demonstrates competitive performances in the full channel flow. It does not behave much differently in the minimum channel flow case because it resolves the optimization problem and calculates the control policy under each scenario. \DDPG~\cite{DDPG}'s performance downgrades significantly in the full channel flow compared to the minimum one. One possible explanation is that their policy model based on fully connected networks (FCN) is sensitive to the flow scale. Nonetheless, \DDPG performs better than \MPCNN~\cite{MLoppositionControl} and opposition control~\cite{OppositionControl}. Furthermore, we observe that \alg also has strong performance in the full channel flow case, which is primarily due to the fact neural operators~\cite{neuralOperator} can scale to other input dimensions because they learn solutions in the function space.

~\Cref{fig-compare-curve} shows comparison curves under several setups. We use shadowed regions to denote variance in this plot, and the control beginning time is marked by a red circle in the plot. We observe that opposition control~\cite{OppositionControl} and \MPCNN~\cite{MLoppositionControl} have smaller training variances with poor control outcomes. \DNSPC~\cite{DNSOptimalBenchmark} performs strongly in some scenarios, especially in high Reynolds numbers. \DDPG~\cite{DeepRLchannel} has large training variances because deep RL algorithms often require extensive trial-and-error, and unsuccessful explorations fail to bring drag reduction. By contrast, \alg has a smaller training variance than \DDPG~\cite{DeepRLchannel} and performs better, especially in high Reynolds numbers, which suggests the benefits of adopting the physics-informed neural-operator-based observer. 

Our approach further uncovers the power of machine-learning-based methods in reducing drag. The advantage of our approach lies both in its ability to reduce drag effectively and in the physical policies it learns~\citep{sonoda2023reinforcement}. Specifically, our method optimizes control policies in a way that leads to dynamically adaptive flow modifications (through our observer design), some of which align with established drag-reduction mechanisms, while others introduce nontrivial patterns of actuation (through our learned neural operator-based policy). Note that, as is common in RL, the learned policy may learn non-trivial ways to reduce drag that, on its own, is a topic of further study.

\Cref{fig-velocity-profile} provides the flow statistics of adopting~\alg after control in the full channel flow case under a Reynolds number of $\re_b = 3k$ and $\re_\tau=178$. We observe that the velocity fluctuations in three dimensions decrease after control. The turbulent kinetic energy (TKE) also decreases after control via \alg, which verifies the effectiveness of the control algorithm from another perspective. 


\begin{figure}[htb]
    \centering 
    \includegraphics[width=0.9\textwidth]{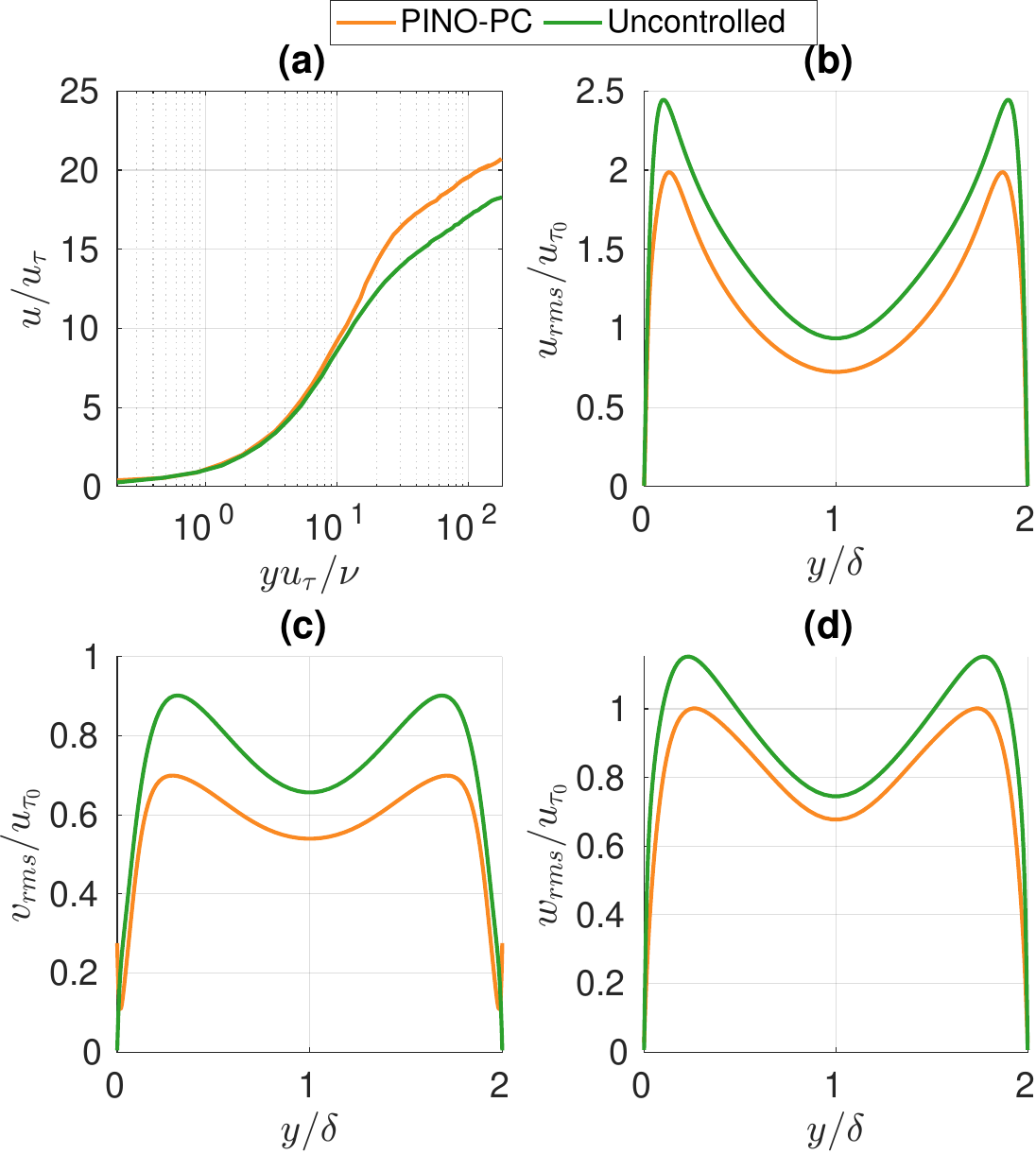}
    \caption{Time averaged statistics of the uncontrolled flow and~\alg (after control) in the full channel flow case. (a) The mean stream-wise velocity profile. (b, c, d) Stream-wise, wall-normal, and span-wise r.m.s. velocity fluctuations in the wall-normal direction.}
    \label{fig-velocity-profile}
\end{figure}

To gain further insight into the dynamics of the current control strategy, additional analyses of the velocity field have been conducted. \Cref{fig-vel-PDF} shows the joint probability density function (PDF) of streamwise ($u'$) and wall-normal ($v'$) fluctuating velocities at the location of $y^+= 15$, where the root-mean-square (r.m.s.) value of the streamwise velocity fluctuation reaches its maximum. The results are evaluated over the statistically steady period for both the uncontrolled and controlled cases at $Re_b\approx 3k$ or $Re_\tau\approx 178$. The joint PDF provides a statistical picture of how velocity fluctuations in the two directions are correlated and highlights the intensity and characteristics of turbulence-producing events, such as sweeps and ejections associated with near-wall streaks in channel flow \citep{jimenez2018coherent}. By normalizing the velocities with the friction velocity of the uncontrolled case, the changes in the distribution of velocity fluctuations introduced by the control become more apparent. The exhibited comparison of the PDFs at $y^+=15$ from the uncontrolled and controlled cases shows that the control alters the distribution of fluctuating velocities. Specifically, with the present control strategy, the range of fluctuations in both the streamwise and wall-normal directions is substantially reduced, suggesting that the near-wall streak intensity and associated sweep as well as ejection activity are weakened. The overall shape of the PDF remains similar between the two cases, with ejection and sweep events still dominating. These qualitative changes are consistent with the effects reported for opposition control \citep{DeepRLchannel}.


\begin{figure}[htb]
   \centering 
   \vspace{-3.5cm}
   \includegraphics[width=1.0\textwidth]{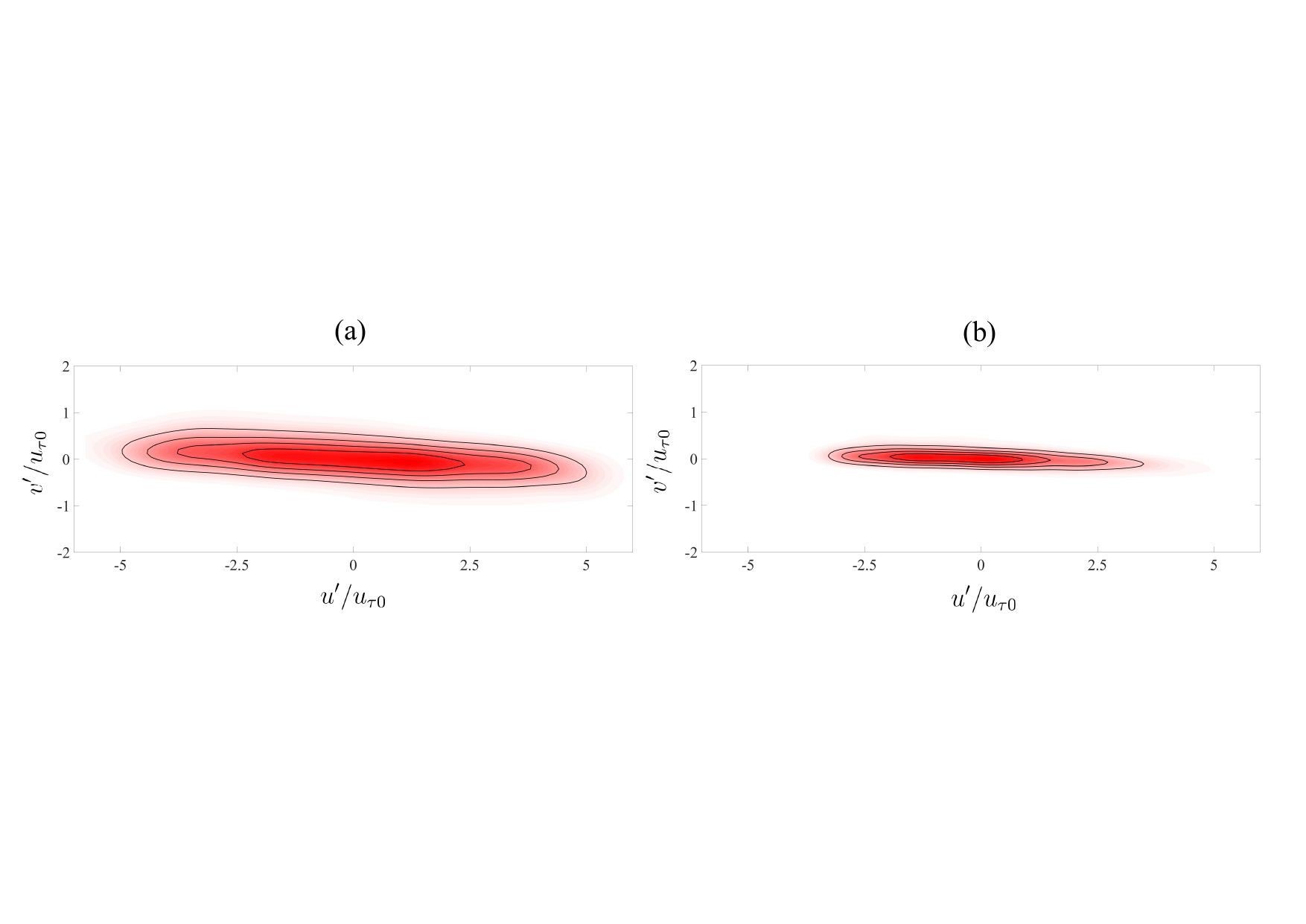}
   \vspace{-3cm}
   \caption{Joint probability density function of the streamwise and wall-normal velocity fluctuations at $y^+=15$ for (a) the uncontrolled full-channel flow and (b) the PINO-PC (with control) full-channel flow. The contour lines denote 20\%, 40\%, 60\%, and 80\% of the maximum probability density values.}
   \label{fig-vel-PDF}
\end{figure}

Beyond the joint PDF of fluctuating velocity components, the premultiplied energy spectra of the streamwise velocity fluctuations for both the uncontrolled and controlled cases are examined, and the results are shown in \Cref{fig-premult-spectra}. These spectra help quantify the impact of control on the characteristic scales of turbulent structures at different wall-normal locations in the channel. Here, $k_x$ and $k_z$ denote the streamwise and spanwise wavenumbers, respectively, while $\lambda_x$ and $\lambda_z$ represent the corresponding wavelengths. The spectra indicate that the flow is dominated by near-wall streaks. In the uncontrolled case, the dominant near-wall structures exhibit a streamwise length scale of $\lambda_x^+\approx500$ and a spanwise length scale of $\lambda_z^+\approx100$, concentrated around $y^+\approx 15$. In the controlled case, although the length scales of the dominant near-wall structures remain similar to those in the uncontrolled flow, their wall-normal positions are shifted slightly upward. More importantly, the energy level of these dominant structures is noticeably reduced, consistent with the trends observed in \Cref{fig-vel-PDF}.

\begin{figure}[htb]
   \centering 
   \includegraphics[width=1.0\textwidth]{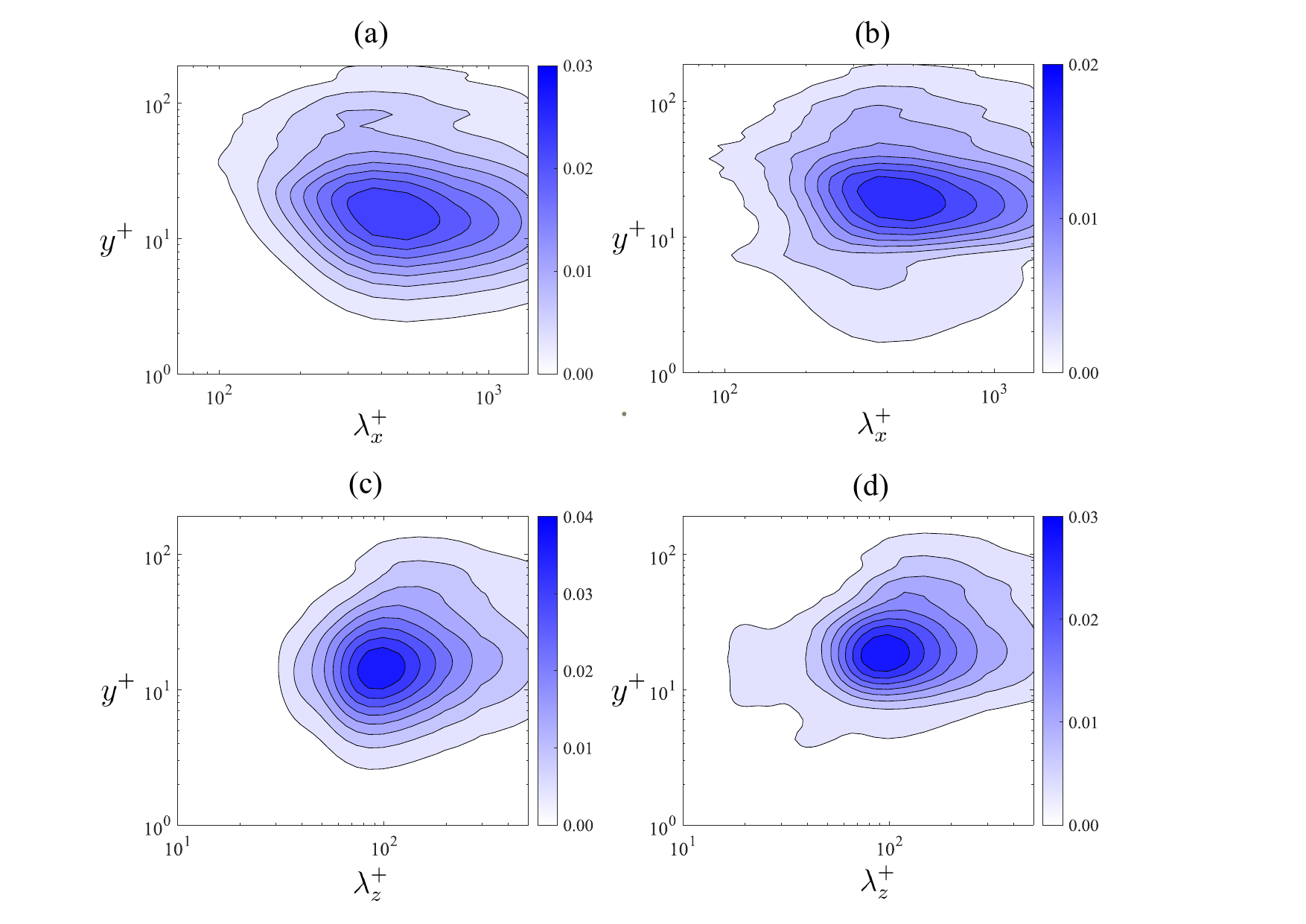}
   \caption{Premultiplied energy spectra of the streamwise velocity fluctuations $u'$ for (a, c) the uncontrolled full-channel flow and (b, d) the PINO-PC (with control) full-channel flow as a functions of wall-distance and wavelengths.}
   \label{fig-premult-spectra}
\end{figure}

The results of the joint PDF and premultiplied energy spectra of the velocity fluctuations indicate a substantial weakening of turbulence activity in the near-wall region, which is closely linked to skin-friction drag generation in wall-bounded flows. In particular, the decreased range of streamwise and wall-normal fluctuations, together with the reduction in energy of the dominant near-wall streaks and associated sweep and ejection events, reflects a reduction in both the intensity of streaks and the strength of vortical structures driving turbulence production, resulting in weaker Reynolds stresses. These findings suggest that the mechanisms responsible for turbulent momentum transfer toward the wall are significantly suppressed. Overall, the observed changes in velocity statistics and spectral content provide clear evidence that the current control strategy effectively attenuates near-wall turbulence structures, leading to the measured drag reduction.

\begin{figure}
    \centering 
    \begin{subfigure}{0.45\linewidth}
    \includegraphics[width=\textwidth]{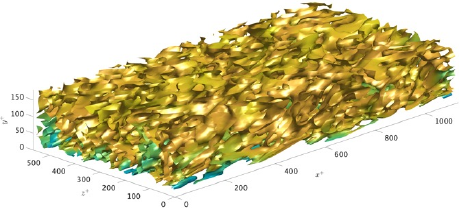}
    \label{Fig-vis3d-uncontrolled}
    \caption{Method: Uncontrolled, DR: $0.0$\%}
    \end{subfigure}
    \begin{subfigure}{0.46\linewidth}
    \includegraphics[width=\textwidth]{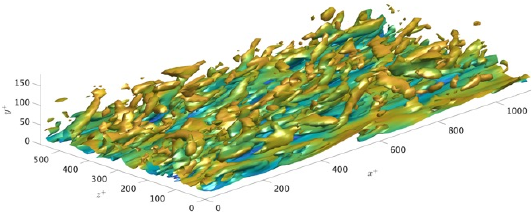}
    \label{Fig-vis3d-mp-cnn}
    \caption{Method: MP-CNN, DR: $15.6$\%}
    \end{subfigure}
    \begin{subfigure}{0.49\linewidth}
    \includegraphics[width=\textwidth]{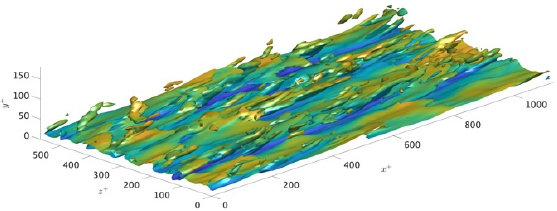}
    \label{Fig-vis3d-marl}
    \caption{Method: DDPG, DR: $33.1$\%}
    \end{subfigure}
    \begin{subfigure}{0.49\linewidth}
    \includegraphics[width=\textwidth]{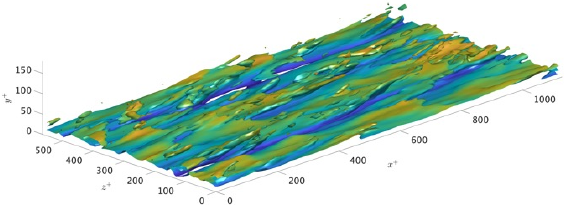}
    \label{Fig-vis3d-pino-pc}
    \caption{Method: PINO-PC, DR: $42.1$\%}
    \end{subfigure}
    \caption{Visualizations of the isosurface of the Q-criterion (a vorticity indicator)~\cite{blackburn1996topology, hammond1998observed,vortexIdentification} under the same threshold ($Q_\tau=50$) after control with Reynolds number $\re_u=3k$. The plane at $y^+=0$ represents the wall boundary. The isosurface is colored by the velocity magnitude. Four flows under four flow control methods are shown in this diagram: Uncontrolled, MP-CNN~\cite{MLoppositionControl}, DDPG~\cite{DeepRLchannel}, and our approach.  Please refer to the main text for details.}
    \label{fig-turbulence-3d-vis}
\end{figure}

\begin{figure}[htb]
    \centering 
    \includegraphics[width=0.94\textwidth]{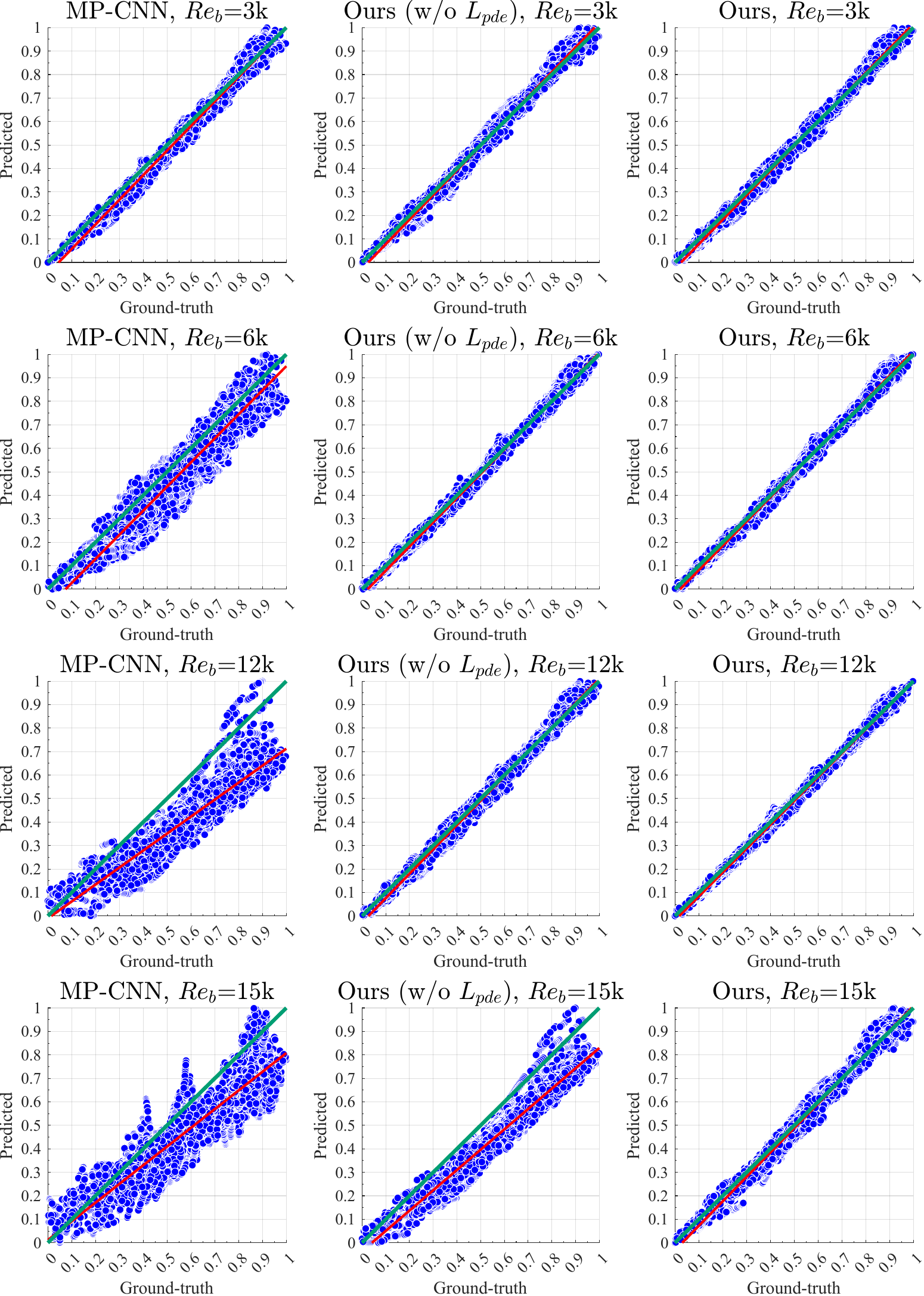}
    \caption{Scatter plot of predicted velocities (y-axis) and the gt velocities (x-axis) under different Reynolds numbers. We normalize the velocities into the [0, 1] range before plotting. We plot the $y=x$ line in green and the fitting line in red. The more accurate a model is, the closer the plotted points should surround the $y=x$ line.}
    \label{fig-scatter-plot}
\end{figure}

Finally, we conduct an isosurface visualization~\cite{hunt1988eddies, vortexIdentification,blackburn1996topology} to compare the performances of several control methods, and the result is presented in~\Cref{fig-turbulence-3d-vis}. The setup is also a full-channel flow under a Reynolds number of $\re_b = 3k$ and $\re_\tau=178$. The isosurface is formed by computing the Q-criterion~\cite{vortexIdentification}, which is associated with vorticity in the flow. The Q-criterion can be computed as:
\begin{equation}
Q \equiv \frac{1}{2}\left(\|\boldsymbol{\Omega}\|^2-\|\boldsymbol{S}\|^2\right),
\label{eq-q-criterion}
\end{equation}
where $\|\boldsymbol{S}\|=\left[\operatorname{tr}\left(\boldsymbol{S} \boldsymbol{S}^T\right)\right]^{1 / 2}$, $\|\boldsymbol{\Omega}\|=\left[\operatorname{tr}\left(\boldsymbol{\Omega} \boldsymbol{\Omega}^T\right)\right]^{1 / 2}$. Here, $\boldsymbol{S}$ and $\boldsymbol{\Omega}$ are the symmetric and anti-symmetric components of the velocity gradient tensor $\nabla u$. Thus, $\boldsymbol{S}$ is the rate of strain tensor, while $\Omega$ is the vorticity tensor. Therefore, $Q$ represents the difference between vorticity magnitude and shear strain rate. The more complex the isosurface of the Q-criterion is, the more vorticity is involved. We can observe from~\Cref{fig-turbulence-3d-vis} that \alg has the most simple isosurface when compared to baselines (Uncontrolled, \MPCNN~\cite{MLoppositionControl}, \DDPG~\cite{DeepRLchannel}). In this plot, we color the isosurface via the velocity magnitude. A large velocity corresponds to a yellow-colored surface, while a smaller velocity is associated with a blue-colored surface. We can observe from this plot that the middle region of the channel flow is associated with large velocities, while Q vanishes at the wall~\cite{vortexIdentification}.

\zelinrevision{Thus, the visualizations of the Q-criterion isosurfaces reveal that our method effectively modulates coherent structures over time, leading to a more stable and controlled flow compared to MP-CNN and DDPG. Specifically, our approach consistently suppresses excessive vortex generation while maintaining structured turbulence, as indicated by the velocity magnitude coloring. In contrast, MP-CNN and DDPG exhibit more fragmented or unstable structures, suggesting less effective regulation of turbulence. Additionally, our method demonstrates improved long-term stability, reducing chaotic fluctuations observed in the uncontrolled case. These results align with the physical intuition that effective control should mitigate high-vorticity regions while maintaining flow coherence. We have provided video visualizations of the time evolution of flow fields with our code release.}

\section{Methods}
\label{sec-methodology}
In this section, we first introduce the problem setup in~\Cref{sec-problem-setting}. We then introduce our proposed method, called physics-informed neural-operator-based predictive control (\alg), in several upcoming subsections. In~\Cref{sec-algorithm-outline}, we introduce the algorithm outline and overview of our proposed predictive control scheme. Subsequently, we propose details of two machine learning models adopted in our framework in~\Cref{sec-observer-model} and~\Cref{sec-policy-model}.

\subsection{Problem setting}
\label{sec-problem-setting}
In this work, we perform a direct numerical simulation (DNS) of a turbulent channel flow, which has been studied in previous drag reduction works~\cite{lee1998suboptimal, OppositionControl, MLoppositionControl,DeepRLchannel}. The flow domain is designed such that the $x$ direction indicates the streamwise direction while $y$ and $z$ direction denote the wall-normal and spanwise directions, respectively.

\subsubsection{The governing equation} 

The governing incompressible Navier-Stokes equations can be formulated as
\begin{equation}
\left\{
\begin{aligned}
\frac{\partial u_j}{\partial x_j} & =0, \\
\frac{\partial u_i}{\partial t}+u_j \frac{\partial u_i}{\partial x_j} & = -\frac{1}{\rho}\frac{\mathrm{d} P}{\mathrm{d} x_1} \delta_{1 i} -\frac{1}{\rho}\frac{\partial p}{\partial x_i}+\nu \frac{\partial^2 u_i}{\partial x_j \partial x_j},
\end{aligned}
\right.
\label{eq-pde}
\end{equation}
where $\delta_{ij}$ is the Kronecker delta, $\rho$ is the density, and $\nu$ is the kinematic viscosity.
Here $(x_1, x_2, x_3) = (x, y, z)$ is the position vector, $(u_1, u_2, u_3) = (u, v, w)$ is the corresponding velocity, and $-{\mathrm{d} P}/{\mathrm{d} x_1}$ is the applied mean pressure gradient to drive the flow. We use $u_{\tau_0}$ to denote the wall shear velocity of the uncontrolled flow, and we use $u_\tau$ to denote the wall shear velocity during the control. Note that here, we use a different term, $p$, written in lowercase to represent the pressure fluctuation. 
The friction Reynolds number of the flow is defined as $\re_\tau = u_{\tau_0}\delta / \nu$, where $\delta$ is the channel half height.

\subsubsection{The solver} 

The simulations are performed by discretizing the incompressible Navier–Stokes equations~(\Cref{eq-pde}) with a staggered, second-order-accurate, central finite-difference method in space and an explicit third-order-accurate Runge–Kutta method for time advancement \cite{wray1990minimal}. The system of equations is solved via an operator splitting approach \cite{chorin1968numerical}. This code has been validated by prior studies on turbulent channel flow \citep{bae_2018, bae_2019, bae2021nonlinear}. The computation domain of the simulation is denoted by $\Omega$, and we use $\Gamma^{+}$ and $\Gamma^{-}$ to denote two walls located at $y = 0$ and $y = 2\delta$, respectively. Periodic boundary conditions are applied to both the spanwise and streamwise directions, and the no-slip boundary condition in the streamwise and spanwise directions is applied at the walls. A no-penetration boundary condition is used at the walls for the uncontrolled case, whereas blowing and suction boundary conditions are used for controlled cases. The implementation of this solver comes from previous studies~\cite{bae2021nonlinear}. From an RL perspective, the solver is considered as an environment.

\subsubsection{Control setups} 

The active control is achieved by applying a wall-normal velocity at the wall, which can either be blow or suction at walls. We call such a velocity a control or an action, denoted by $\phi$. Controls are applied at both walls (as shown on the left of~\Cref{fig-scenario}), while we focus on one wall in methodology formulation for simplicity. We use variables with subscript $w$ to denote physical variables associated with the wall. For example, we use $p_w$ to denote pressure at the wall. We interchangeably use the terms ``control'', ``action'', or ``actuation'' to denote $\phi$ in this paper. During the control, the channel flow is driven by a uniform mass \zelinrevision{flux}~\cite{OppositionControl, MLoppositionControl, DNSOptimalBenchmark}, which fixes the bulk Reynolds number $\re_b = u_b\delta/\nu$, where $u_b$ is the mass flow rate. This is achieved by adapting the mean pressure gradient ${\mathrm{d} P}/{\mathrm{d} x_1}$ at each time step~\cite{LargeEddy}.
The control target is to reduce the mean pressure gradient to achieve a drag reduction (DR):
\begin{equation}
DR = \frac{(-\frac{\mathrm{d} P}{\mathrm{d} x_1})_{t=0} - (-\frac{\mathrm{d} P}{\mathrm{d} x_1})_{t=T}}{(-\frac{\mathrm{d} P}{\mathrm{d} x_1})_{t=0}},
\label{eq-ns}
\end{equation}
where $(-{\mathrm{d} P}/{\mathrm{d} x_1})_{t=0}$ denotes the pressure gradient of the uncontrolled flow and $(-{\mathrm{d} P}/{\mathrm{d} x_1})_{t=T}$ denotes the pressure gradient after the control (at the termination timestep $t=T$).

\subsubsection{Computational domains} 

We discretize the computational domain via a staggered central finite-difference method. We consider two different simulation domains~\cite{DeepRLchannel}, where the first one is called a \textit{minimal channel} with size $\Omega=L_x \times L_y \times L_z = 1.77\delta \times 2\delta \times 0.89 \delta$. This channel flow is large enough to reflect relevant near-wall turbulent statistics and is used to reduce the computational burden. We also adopt another larger channel domain called a \textit{full channel}, which is of size $L_x=2\pi \delta$, $L_y=2\delta$, and $L_z=\pi \delta$. For the case with $\re_\tau \approx 180$, we discretize the streamwise and spanwise directions uniformly using $N_x \times N_z = 32 \times 32$ for the minimum channel and $N_x\times N_z = 128\times 128$ for the full channel, which result in streamwise and spanwise grid spacing of $\Delta x^+ \approx 10$ and $\Delta z^+ \approx 5$. Here, the superscript $+$ denotes the wall units defined by $\nu$ and $u_{\tau_0}$. In the wall-normal direction, we use a hyperbolic-tangent stretching function of size $N_y = 130$, which results in a wall-normal spacing of $\min(\Delta y^+) \approx 0.17$ at the wall and $\max(\Delta y^+) \approx 7.6$ at the center of the channel domain.
\begin{center}
\begin{table}
\def~{\hphantom{0}}
\begin{tabular}{@{}c|ccc|ccc@{}}
\toprule
        & \multicolumn{3}{c|}{Minimum channel}                      & \multicolumn{3}{c}{Full channel}                          \\ \midrule
$\re_b$ & \multicolumn{1}{c|}{Nx}  & \multicolumn{1}{c|}{Ny}  & Nz  & \multicolumn{1}{c|}{Nx}  & \multicolumn{1}{c|}{Ny}  & Nz  \\ \midrule
$3k$    & \multicolumn{1}{c|}{$32$}  & \multicolumn{1}{c|}{$130$} & $32$  & \multicolumn{1}{c|}{$128$} & \multicolumn{1}{c|}{$130$} & $128$ \\
$6k$    & \multicolumn{1}{c|}{$64$}  & \multicolumn{1}{c|}{$260$} & $64$  & \multicolumn{1}{c|}{$256$} & \multicolumn{1}{c|}{$260$} & $256$ \\
$9k$    & \multicolumn{1}{c|}{$96$}  & \multicolumn{1}{c|}{$390$} & $96$  & \multicolumn{1}{c|}{$384$} & \multicolumn{1}{c|}{$390$} & $384$ \\
$12k$   & \multicolumn{1}{c|}{$160$} & \multicolumn{1}{c|}{$520$} & $160$ & \multicolumn{1}{c|}{$512$} & \multicolumn{1}{c|}{$520$} & $512$ \\
$15k$   & \multicolumn{1}{c|}{$192$} & \multicolumn{1}{c|}{$650$} & $192$ & \multicolumn{1}{c|}{$640$} & \multicolumn{1}{c|}{$650$} & $640$ \\ \bottomrule
\end{tabular}
  \caption{A list of bulk Reynolds numbers $\re_b$ along with corresponding grid resolutions used in our study.}
  \label{tab-list-of-reynolds}
\end{table}
\end{center}

\subsubsection{The Reynolds numbers} 

For the remainder of the paper, we use $\re$ to denote $\re_b$. We experiment on different Reynolds numbers in the numerical study to test the generalization ability of the concerned methods. When changing the Reynolds numbers, we also proportionally scale the grid resolution $N_x$, $N_y$, and $N_z$ to preserve the grid resolutions in wall units. We give specific configurations of those parameters in~\Cref{tab-list-of-reynolds}.

\begin{figure}[H]
    \centering
    \includegraphics[width=\textwidth]{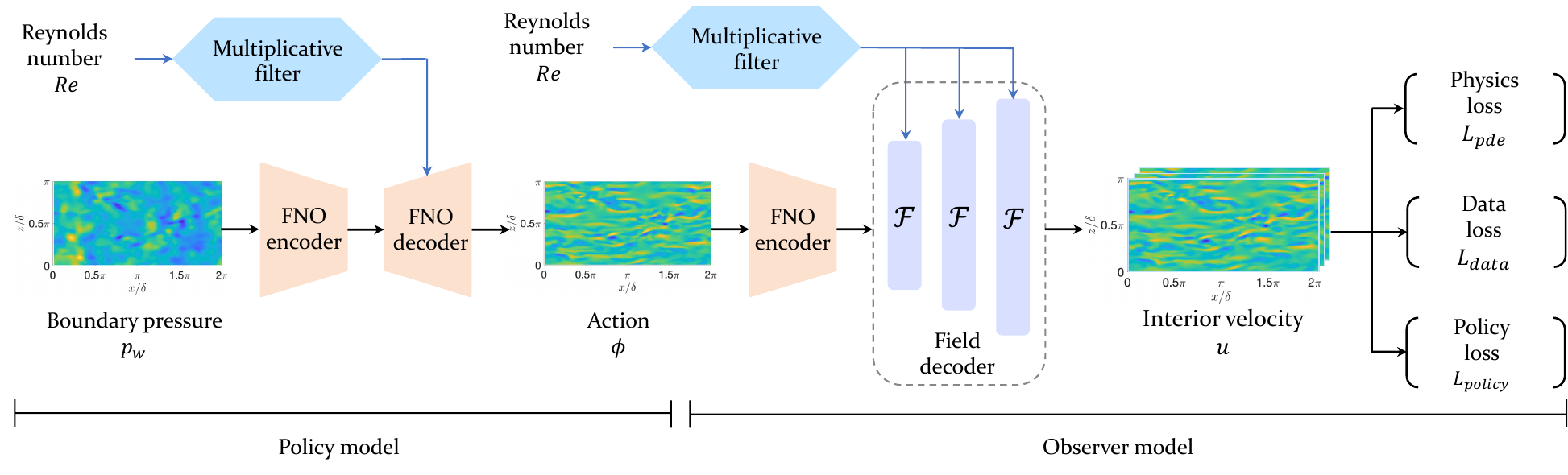}
    \caption{The policy model and the observer model in~\alg. Both the policy model and the observer model are conditioned on the Reynolds number $\re$, encoded by multiplicative filters~\citep{MFN}. The function of the policy model is to give the boundary velocity $\phi$ based on the boundary pressure $p_w$. It is instantiated with an \FNO encoder and an \FNO decoder. The output boundary velocity $\phi$ is sent to the observer model, which leverages the \FNO encoder to encode the boundary velocity. Then, the observer model uses a field decoder to output interior velocity $u$. We use a data loss $L_{data}$ (\Cref{eq-data-loss}) and a physics loss $L_{pde}$ (\Cref{eq-pde-loss}) to optimize the observer. Meanwhile, we adopt a policy loss $L_{policy}$ (\Cref{eq-policy-loss}) to optimize the policy model.}
    \label{fig-all-model}
\end{figure}

\subsection{Algorithm outline of \alg}
\label{sec-algorithm-outline}
In this paper, we propose a machine-learning-based framework for flow control inspired by previous predictive control (PC) studies~\citep{lee1998suboptimal,DNSOptimalBenchmark}. The overall schematic is shown in~\Cref{fig-all-model}. The proposed framework adopts a policy model denoted as~\policyname, and an observer model denoted as~\observername. The policy model~\policyname\space predicts the control $\phi$ based on the boundary pressure $p_w$. \zelinrevision{In our framework, we take the current pressure as input observation to respond to the current pressure and generate corresponding controls. Given the fact that we control the dynamics, the pressure is not uncontrolled time sequence, its dynamics is effected by control inputs. The design of controlling based on pressure is demonstrated effective in MP-CNN~\cite{MLoppositionControl}.} The observer model~\observername\space predicts the interior velocity field $u$ based on the control $\phi$. The velocity field $u$ is used to predict the outcome (the reward) of the control $\phi$ to revise the control accordingly. For convenience, we focus on the top wall in the whole section, while the control also happens in the bottom wall in practice.

We propose a machine learning algorithm to jointly optimize the policy model and the observer model in~\Cref{algorithm}. We use a memory (a ``replay buffer''~\cite{dqn}) to store the most recent collected data, which is used to train and optimize those models. Initially, the policy model and the observer model are initialized with zero weights, and the memory is initialized as an empty collection. The main loop of the algorithm iterates through episodes and conducts data collection, control, and learning during each episode: At the start of one episode (Line $4$), \alg collects the wall pressure $p_w$, the field velocity $u$ and the drag $-\frac{dP}{dx_1}$ from the solver, and stores them to the memory. \zelinrevision{The collected drag $-\frac{dP}{dx_1}$ is used in computing the physics loss $L_{pde}$, which will be discussed in detail,~\Cref{subsec-pde-loss}.} Next, the policy model predicts the control $\phi$ based on the boundary pressure (Line 5). Then, the control $\phi$ is applied to the wall, and the solver is updated to the next timestep (Line $6$). Subsequently, the algorithm trains and optimizes the observer model and the policy model based on newly collected data (Lines $8-12$). For each training epoch, a data tuple is sampled from memory (Line $9$). Then, the observer model is learned through optimizing two loss functions (introduced later): supervised loss $L_{data}$ (\Cref{eq-data-loss}) and physics loss $L_{pde}$ (\Cref{eq-pde-loss}). After that, the policy model is trained to optimize a policy loss $L_{policy}$ (Line $11$). Here, the observer model is used to predict the outcome of the policy model, so when training the policy model, the observer model is fixed, which means the weights of the observer model are not changed during the optimization of the policy model. \zelinrevision{Note that our observer model is not fixed throughout all episodes of training the policy model, and hence, it incorporates different dynamics. Specifically, Line $10$ in~\Cref{algorithm} means the observer can learn from the collected experiences of different controls, because the memory is retained from prior episodes in the replay buffer} 

\zelinrevision{We build the policy model and the observer model based on neural operators~\cite{neuralOperator}. Neural operators are neural networks that learn the map between infinite-dimensional function spaces, such as the Fourier neural operator (\FNO~\cite{FNO}). Since the states in the dynamics include pressure, which is a function, neural operators are particularly well-suited for modeling turbulent flows. Their ability to output functions allows us to incorporate physics-informed losses during training, further enhancing their effectiveness. Prior studies~\cite{neuralOperator} have shown that neural operators outperform existing machine-learning-based methodologies, and we verify in our experiments that our neural-operator-based model achieves superior performance compared to prior state-of-the-art approaches. Our models contain approximately 0.35 million parameters in total, striking a balance between expressive capacity and computational efficiency.}

\begin{algorithm}[]
\caption{Physics-informed neural-operator-based predictive control (\alg)}
\begin{algorithmic}[1]
\State Initialize the policy model, the observer model, and the memory;
\For{each episode}
    \For{each timestep in episode} \Comment{Data collection: roll out and collect trajectory}
        \State Collect $p_w$, $\boldsymbol{u}$, and $-\frac{dP}{dx_1}$ to the memory;
        \State Predict the control $\phi$ with the policy model \policyname;
        \State Apply the control $\phi$ to the wall, and step the solver to the next timestep;
    \EndFor
    \For{each epoch} 
        \State Sample a data tuple ($p_w$, $u$, $-\frac{dP}{dx_1}$) from the memory;
        \State Update the observer model based on a combined loss $L_{data} + L_{pde}$;
        \State Fix the observer model, update the policy model to optimize the target $L_{policy}$;
    \EndFor
\EndFor
\end{algorithmic}
\label{algorithm}
\end{algorithm}

\subsection{The policy model \texorpdfstring{\policyname}{MP}}
\label{sec-policy-model}

We introduce a policy model \policyname\space to predict the control $\phi$ based on the boundary pressure $p_w$. The structure of the policy model is shown on the left of~\Cref{fig-all-model}. 

\subsubsection{\textbf{The~\FNO encoder of the policy model}} 

First, the policy model leverages an \FNO~\citep{FNO}, which is a variant of neural operators~\cite{neuralOperator}. Given the boundary pressure $p_w:\Gamma^{+}\rightarrow \mathbb{R}$, we use an \FNO encoder to encode it to the latent function $h_{p}$:
\begin{equation}
    h_p = \FNO(p_w).
\end{equation}

\subsubsection{\textbf{Conditioning on the Reynolds number}} 

To help the model adapt to unseen Reynolds numbers, we also condition the decoder model on the Reynolds number $\re$. We propose to use the multiplicative filter (\MFN)~\citep{MFN} to encode the Reynolds number $\re$ to get latent representations. The multiplicative filter takes the hidden feature and the Reynolds number $\re$ as input and outputs a new feature $h_{pm}$:
\begin{equation}
    h_{pm} = \MFN(h_p),
\end{equation}
where the \MFN is based on the sinusoidal filter $g$ to encode the Reynolds number $\re$:
\begin{equation}
g\left(\re; \theta^{(i)}\right)=\sin \left(\omega^{(i)} \frac{\re}{\operatorname{Re_m}}+\tau^{(i)}\right).
\end{equation}
Here $\theta^{(i)}=\{\omega^{(i)}, \tau^{(i)}\}$ are parameters of the sinusoidal filter, and a constant Reynolds number $\operatorname{Re_m}=100,000$ is used to normalize the input Reynolds number $\re$. The \MFN then performs the following recursion with $L$ layers:
\begin{equation}
\begin{aligned}
z^{(1)} & =h_p, \\
z^{(i+1)} & =\left(W^{(i)} z^{(i)}+b^{(i)}\right) \circ g\left(\re ; \theta^{(i+1)}\right), i=1, \ldots, L-1, \\
h_{pm} & =W^{(L)} z^{(L)}+b^{(L)}, \\
\end{aligned}
\end{equation}
where $W^{(i)}$ and $b^{(i)}$ are the learnable linear transform and bias. 

\subsubsection{\textbf{The \FNO decoder of the policy model}} 

Finally, the policy model uses an \FNO decoder to get the control:
\begin{equation}
    \phi = M_p(p_w) = \FNO(h_{pm}) - \operatorname{mean}(\FNO(h_{pm})),
\label{eq-policy-model}
\end{equation}
where we use a normalization function to ensure that we don't add mass to the system.

\subsubsection{\textbf{The policy loss}} 

To optimize the policy model, we adopt a policy loss (corresponding to Line 11 in~\Cref{algorithm}). The policy loss is written in two terms: the turbulent kinetic energy (TKE) and the norm of the control:
\begin{equation}
L_{policy}(\phi) = E_t\left(\int_{\Omega}|\boldsymbol{u}(t+\Delta t)|^2 \mathrm{~d} \boldsymbol{x}+\frac{\lambda_{n}}{\Delta t} \int_{\Gamma_2^+} \int_t^{t+\Delta t} \phi^2 \mathrm{~d}\tau\mathrm{~d}S \right).
\label{eq-policy-loss}
\end{equation}
In this equation, the expectation is taken over the concerned episode. The TKE is computed based on the field velocity, where the field velocity is the predicted outcome of the control with the fixed observer model (this prediction procedure will be introduced in the next subsection). We use $\boldsymbol{u}(t+\Delta t)$ to denote the velocity at the time after a period $t+\Delta t$, which helps obtain long-term gain. 
Here $\lambda_n = 0.5$ is a balancing term of the regularization term, and $\Delta t$ is the concerned time window.

\zelinrevision{The kinetic energy, whose minimization is associated with the suppression of turbulence, subsequently results in drag reduction. In this work, we utilize this association. Usually, reducing the turbulent kinetic energy causes a decrease in drag, where more background of this can be found in~\cite{DNSOptimalBenchmark}.}

\subsection{The observer model}
\label{sec-observer-model}
In this subsection, we present a PDE observer model called \observername, which predicts internal velocity field~$\boldsymbol{u}$ given the control $\phi$. We use $\phi_t$ to denote the control at discrete timestep $t$. The observer model is also conditioned on the Reynolds number $\re$ to boost generalization to different Reynolds numbers. The observer model is shown on the right of~\Cref{fig-all-model}.

\subsubsection{\textbf{The \FNO encoder of the observer model}} 

The boundary velocity $\{\phi\}$ is normalized and passed through an \FNO encoder. The \FNO encoder takes the boundary velocity as input and outputs a hidden feature of controls $h_c: \Gamma^{+} \rightarrow \mathbb{R}^{d_1}$ where $d_1$ is the hidden feature dimension.

\subsubsection{\textbf{The field decoder of the observer model}} 

We then use a field decoder to transform the hidden feature $h_c$ to the field velocity $\boldsymbol{u}: \Omega \rightarrow \mathbb{R}$, $\boldsymbol{v}: \Omega \rightarrow \mathbb{R}$, and $\boldsymbol{w}:\Omega \rightarrow \mathbb{R}$. To achieve this, we first generate latent representations for each of the field velocities, which is performed by an inflating hidden function $h_{in}: \Omega \rightarrow \mathbb{R}^{d_2}$, ($d_2$ is the dimension of the inflated feature) turning the 2D hidden feature $h_c$ to the 3D space:
\begin{equation}
h_{in}(x, y, z) = h_c(x, z) \oplus \operatorname{PosEmb}(y),
\end{equation}
where $\oplus$ denotes the concatenation operator, and $\operatorname{PosEmb}$ is a positional embedding function which turns $y$ into a hidden feature:
\begin{subequations}
\begin{equation}
\gamma(y, j) =
    \begin{cases}
        \sin(2^{\lfloor j / 2 \rfloor}\pi y), & \text{if } j \bmod 2 = 0,\\
        \cos(2^{\lfloor j / 2 \rfloor}\pi y), & \text{else.}
    \end{cases}
\label{eq-fourier-features-1}
\end{equation}
\begin{equation}
\operatorname{PosEmb}(y) = \gamma(y, 1)  \oplus \gamma(y, 2)  \oplus \ldots  \oplus \gamma(y, n_p),
\label{eq-fourier-features-2}
\end{equation}
\label{eq-fourier-features}
\end{subequations}
where $n_p$ is the number of the trigonometric functions adopted to form the positional embeddings. After that, we decode the hidden functions into $u, v, w$ with 3D \FNO modules~\cite{FNO}:
\begin{equation}
u = \operatorname{FNO3D}(\operatorname{h_{in}}), v = \operatorname{FNO3D}(\operatorname{h_{in}}), w = \operatorname{FNO3D}(\operatorname{h_{in}}).
\end{equation}

\paragraph{Discussions on the encoder-decoder structure}
\zelinrevision{
The encoder-decoder structure in our policy model is not primarily for data compression as in the conventional sense but rather for learning an efficient and structured representation of the input flow state. While it is true that the dimensionality of the input $p_w$ and output $\phi$ is consistent, a direct mapping from $p_w$ to
$\phi$ without an encoder-decoder structure may not fully capture the complex and nonlinear relationships necessary for effective turbulence control~\citep{FNO}.}

\zelinrevision{
As it is a common practice in deep learning to develop efficient models, the encoder serves to extract meaningful, low-dimensional latent features that are most relevant to control decisions, effectively filtering out irrelevant information or noise. This is especially crucial in high-dimensional flow fields where direct mappings can be overly sensitive to variations and may not generalize well. The decoder then reconstructs the optimal control action based on these learned features, ensuring robustness and stability.}

\subsubsection{\textbf{The data loss}} 

We introduce the data loss to train the observer model, which penalizes the $L2$ distance between the predicted and the ground-true field velocity, denoted by $u$ and $u_{gt}$ correspondingly:
\begin{equation}
L_{data}=\mathbb{E}_{x, y, z}\left(\frac{u_{gt}(x, y, z)-u(x, y, z)}{\bar{u}(x, y, z)}\right)^2,
\label{eq-data-loss}
\end{equation}
where $\bar{u}$ is the root-mean-square velocity at each point $(x, y, z)$, and the expectation is taken in the full domain $\Omega$.

\subsubsection{\textbf{Physics-informed learning and the PDE loss}}
\label{subsec-pde-loss}

Despite the supervised loss introduced in~\Cref{eq-data-loss}, we further introduce a PDE loss $L_{pde}$ to optimize the model by leveraging the governing PDE (\Cref{eq-pde}). This approach belongs to physics-informed learning~\citep{PINN, PINO}, where through optimizing the PDE loss, the observer can be optimized without ground-true data acquired from precise simulation. This technique is helpful when the training data is scarce, such as in the high Reynolds number region. Our experiments show that physics-informed learning can help the observer model generalize to unseen Reynolds numbers.

We implement the PDE loss based on the difference between the temporal gradient of the predicted velocity and the right-hand-side terms. We denote the predicted velocity via $\frac{du_\theta}{dt}$, $\frac{dv_\theta}{dt}$, and $\frac{dw_\theta}{dt}$, and we denote rest terms via a function $R$, then the PDE loss is given as:
\begin{equation}
L_{pde} = \left|\frac{du_\theta}{dt} - R(u_\theta) \right| + \left|\frac{dv_\theta}{dt} - R(v_\theta) \right| + \left|\frac{dw_\theta}{dt} - R(w_\theta) \right|,
\label{eq-pde-loss}
\end{equation}
where the velocity gradients are estimated via temporal difference, and we compute $R(u_i)$ as:
\begin{equation}
R(u_i) = -u_j \frac{\partial u_i}{\partial x_j} -\frac{\mathrm{d} P}{\mathrm{d} x_1} \delta_{1 i} -\frac{\partial p}{\partial x_i}+\frac{1}{\re_\tau} \frac{\partial^2 u_i}{\partial x_j \partial x_j}.
\end{equation}
Here, $u_i$ denotes any of the predicted velocity $u_\theta$, $v_\theta$, and $w_\theta$.





\section{Discussion}
\label{sec-summary}
In this work, we address the challenge of turbulent flows in the wall-bounded scenario. We consider an active control setup, implementing control through blowing and suction at the wall. We propose a novel machine-learning-based predictive control scheme, \alg. This framework leverages a policy model and an observer model. The policy model is used to predict the control (applied boundary velocity) based on the boundary pressure. The observer model predicts the control outcome (internal field velocity) based on the control.

\zelinrevision{We test our method on Reynolds numbers that are unseen during training, specifically, higher Reynolds numbers that correspond to highly turbulent flows. Fluid flows with different $\re$ have shared features at multiple scales. Even then, adapting the control to unseen $\re$, especially higher $\re$ is challenging due to increased nonlinear interactions. Our method works effectively even under this challenging setting since it can adapt online to unseen scenarios while also utilizing the shared features from its earlier training. Such transfer learning across different $\re$ can be further enhanced by explicitly incorporating relationships across different scales, which is of interest for further investigation.}

Our approach demonstrates superior accuracy and drag reduction compared to alternative machine-learning methods. Notably, \alg achieves a remarkable 43.5\% drag reduction for Reynolds numbers not included in the training data, surpassing both opposition control and the optimal control baseline. The proposed iterative learning procedure, with extensive observer and policy learning, proves effective in achieving more robust turbulence control. This work provides a foundation for more efficient and practical turbulence control methodologies.

\zelinrevision{Our \alg is not just data-driven but is physics-informed and, hence, can generalize beyond a training regime. However, if the domain shift is drastic, e.g., a very high $\re$, we do not expect our method to succeed, and it is an open question if further algorithmic development is possible.}

For the present problem setup, we test our method on unseen Reynolds numbers, specifically, highly turbulent flows with high Reynolds numbers, for which we observe positive results. In the case of high Reynolds numbers, extending our model learning, along with control and policy learning, is challenging in the plain setting of function-to-function map learning. However, due to the symmetry in PDEs, behavior at different  Reynolds numbers share the same physics, and stronger algorithmic developments are needed to utilize this characteristic of our problem setup, which counts as a limitation of our current method.

Furthermore, in the empirical study, we consider fixed placement of the sensors. The proposed method method applies to any sensor configurations, which is a direct result of characterizing the problem formulation in function space. A study on the effect of the sensory configuration on the final performance of the algorithm is of interest.


\clearpage
\newpage 

\section{Appendix}

\label{sec-appendix}
\subsection{Comparisons with other methods}
A comparative analysis of our approach and related methodologies is provided in~\Cref{tab-method-comparisons}, affording a comprehensive understanding of \alg's features.

\begin{table*}[t!]
\def~{\hphantom{0}}
\resizebox{\columnwidth}{!}{%
\begin{tabular}{c|c|c|c|c|c|c}
\toprule
Method name               & Opposition                & \DNSPC & Local suboptimal          & MP-CNN                       & DDPG                     & \alg       \\ \midrule
Reference                        & \citep{OppositionControl} & \citep{DNSOptimalBenchmark}  & \citep{lee1998suboptimal} & ~\citep{MLoppositionControl} & ~\citep{DeepRLchannel}   & Ours          \\ \midrule \midrule
1. Machine learning model backbone       & N/A                       & N/A                          & N/A                       & CNN                          & FCN & Neural operators \\ \midrule
2. Based on predictive control          & \budui                    & \dui                         & \dui                      & \budui                       & \budui                   & \dui       \\ \midrule
3. Only need boundary observation    & \budui                    & \budui                       & \dui                      & \dui                         & \dui                     & \dui       \\ \midrule
4. Experiment with varied Reynolds numbers & \budui                      & \budui                       & \budui                    & \dui                         & \budui                   & \dui       \\ \midrule
5. Use a PDE observer           & N/A                       & N/A                          & N/A                       & \dui                       & \budui                   & \dui       \\ \midrule
6. Use physics-informed learning    & N/A                       & N/A                          & N/A                       & \budui                       & \budui                   & \dui       \\ \bottomrule
\end{tabular}
}
  \caption{In the context of the flow control problem, we present comparisons between previous and our approaches. Each column corresponds to a specific method, and each row denotes a particular property. The first property indicates the machine learning model used in those approaches, with the first three methods not employing machine learning techniques. The second property pertains to whether a method is grounded in predictive control. \alg is rooted in predictive control as it performs control based on the predicted impact of the boundary velocity. The subsequent property addresses whether the method can function solely with boundary information, excluding the need for internal field data. All three machine learning models listed can achieve this, except for Local suboptimal~\cite{lee1998suboptimal}. The fourth property outlines whether a method has been verified in flows of varied Reynolds numbers. The final two properties are techniques used in machine learning methods. Both MP-CNN~\cite{MLoppositionControl} and our approach leverage a PDE observer. Furthermore, our model is the only approach that employs physics-informed learning in the control procedure.}
  \label{tab-method-comparisons}
\end{table*} 

\subsection{Theorectical results of~\DDPG}

\DDPG models the control problem as a Markov decision process (MDP) in function space. The MDP consists of several components: a state space $\mathcal{X}$, an action space $\mathcal{A}$. A state $x\in \mathcal{X}$ and an action $a\in \mathcal{A}$ are functions. We use $p_1(x)$ to denote the initial probabilistic measure over states, and $p(x_{t+1}|x_{t})$ to denote the probabilistic measure describing the transition dynamics distribution, while the MDP satisfies the Markov property $p(x_{t+1}|x_1, ..., x_t, a_t)=p(x_{t+1}|x_t, a_t)$, for any trajectory $x_1, a_1, x_2, a_2, ..., x_T, a_T$ in state-action space, and a reward function $r: \mathcal{X}\times \mathcal{A} \rightarrow \mathbb{R}$. We define a deterministic policy $\mu_\theta: \mathcal{X} \rightarrow \mathcal{A}$ parameterized by $\theta$. The discount factor $\gamma \in [0, 1)$ is given to calculate the total discounted reward (i.e., the return) $r_t^\gamma$. We denote the Q function to be $Q^\mu (x, a)=\mathbb{E}[r_1^\gamma | X_1=x, A_1=a;\mu]$, and we denote the V function as $V^\mu(x) = \mathbb{E}[r_1^\gamma | X_1=a;\mu]$. For simplicity, we superscript value functions by $\mu$ instead of $\mu_\theta$. We denote the Fréchet derivative of the state-action value function with respect to the action to be $D_a(Q^\mu(x, a))$, and we denote the derivative of the transition probability to be $D_a(p(x_{t+1}|x_{t}, a))$. We further denote the discounted state occupancy measure by $\rho^\mu(x')=\int_{\mathcal{X}} \sum_{t=1}^{\infty} \gamma^{t-1} p_1(x) p\left(x \rightarrow x^{\prime}, t, \mu\right) \mathrm{d} x$, and we consider the following performance objective:
\begin{equation}
J(\mu_\theta) = \mathbb E_{x\sim \rho^\mu}[r(x, \mu_\theta(x)] = \int_{\mathcal{X}} V^{\mu}(x)\mathrm{d} p_1(x).
\label{eq-target-j}
\end{equation}

Then, we introduce the deterministic policy gradient theorem for neural operators as follows:

\begin{theorem}[Deterministic policy gradient theorem for neural operators]
Suppose that the MDP satisfies the following regularization conditions 
\begin{enumerate}
    \item $p(x'|x, a)$, $D_a(Q^\mu(x, a))$, $\mu_\theta(x)$, $r(x,a)$, $D_a r(x, a)$, and $p_1(x)$ are continuous in all parameters and variables $x, a, x'$ and $\theta$,
    \item there exists $b$ and $L$ such that $\sup_s p_1(x) < b$, $\sup_{a, x, x'} p(x'|x, a) < b$, $\sup_{a,x} \|r(x, a)\| < b$, $\sup_{a, x, x'}\| D_a p(x'|x, a)\|<L$, and $\sup_{a,x} \| D_a r(x, a) \| < L$,
\end{enumerate}
then $\nabla_\theta \mu_\theta(x)$ and $D_a(Q^\mu(x, a))$ exist and the deterministic policy gradient is given as:
$$
\begin{aligned}
\nabla_\theta J\left(\mu_\theta\right) & =\left.\int_{\mathcal{X}} \rho^\mu(x) \nabla_\theta \mu_\theta(x) D_a(Q^\mu(x, a))\right|_{a=\mu_\theta(x)} \mathrm{d} x \\
& =\mathbb{E}_{x \sim \rho^\mu}\left[\left.\nabla_\theta \mu_\theta(x) D_a(Q^\mu(x, a))\right|_{a=\mu_\theta(x)}\right].
\end{aligned}
$$
\end{theorem}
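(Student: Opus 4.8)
\section*{Proof proposal}

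The plan is to adapt the classical deterministic policy gradient argument to the function-space setting, replacing Euclidean gradients by Fréchet derivatives and replacing integrals over a finite-dimensional state space by integrals against measures on the function space $\mathcal{X}$. The starting point is the identity $V^\mu(x) = Q^\mu(x, \mu_\theta(x))$ together with the Bellman recursion
\begin{equation*}
Q^\mu(x, a) = r(x, a) + \gamma \int_{\mathcal{X}} p(x' \mid x, a)\, V^\mu(x')\, \mathrm{d}x',
\end{equation*}
which expresses the state-action value as the immediate reward plus the discounted value of the successor state. Regularity condition (1) guarantees that every quantity entering this recursion is continuous in $\theta$, $x$, $a$, and $x'$, so that $Q^\mu$, $V^\mu$, and their $\theta$-derivatives are well defined.

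First I would differentiate $V^\mu(x) = Q^\mu(x, \mu_\theta(x))$ with respect to $\theta$. Applying the chain rule through the explicit $\theta$-dependence in $\mu_\theta$ and through the successor value inside the integral yields
\begin{equation*}
\nabla_\theta V^\mu(x) = \nabla_\theta \mu_\theta(x)\, D_a Q^\mu(x, a)\big|_{a = \mu_\theta(x)} + \gamma \int_{\mathcal{X}} p(x' \mid x, \mu_\theta(x))\, \nabla_\theta V^\mu(x')\, \mathrm{d}x',
\end{equation*}
where the ``local'' term collects both $D_a r$ and $\gamma \int D_a p\, V^\mu$ into $D_a Q^\mu$, while the ``propagated'' term carries the $\theta$-dependence flowing through the successor value. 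Condition (1) justifies differentiating under the integral sign, and condition (2), which bounds $p_1$, $p$, $r$ and the action-derivatives $D_a p$, $D_a r$ uniformly, supplies an integrable dominating envelope so that the interchange of $\nabla_\theta$ with the integral over $\mathcal{X}$ is legitimate.

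Next I would unroll this recursion. Iterating the displayed identity and collecting terms by the number of transition steps gives a discounted sum whose $t$-step contribution is weighted by the kernel $p(x \to x', t, \mu)$; passing to the limit and using the definition of the discounted state-occupancy measure $\rho^\mu$, the accumulated weight of the local term at each reachable state $x'$ is exactly $\rho^\mu(x')$. Finally, integrating against $p_1$ and using $J(\mu_\theta) = \int_{\mathcal{X}} V^\mu(x)\, \mathrm{d}p_1(x)$ produces
\begin{equation*}
\nabla_\theta J(\mu_\theta) = \int_{\mathcal{X}} \rho^\mu(x)\, \nabla_\theta \mu_\theta(x)\, D_a Q^\mu(x, a)\big|_{a = \mu_\theta(x)}\, \mathrm{d}x,
\end{equation*}
which is the claimed formula once rewritten as an expectation over $x \sim \rho^\mu$.

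The hard part will be the analytic justification in infinite dimensions rather than the algebra of the recursion. Convergence of the unrolled series relies on $\gamma < 1$ together with the uniform bounds of condition (2): these furnish a geometric majorant of the form $\sum_t \gamma^{t} b L$ controlling each term independently of $x$, ensuring both that the series converges and that $\rho^\mu$ is a finite measure on $\mathcal{X}$. The more delicate point is replacing the finite-dimensional Leibniz rule by its Banach-space analogue: one must verify that $\theta \mapsto \int_{\mathcal{X}} p(x' \mid x, \mu_\theta(x))\, V^\mu(x')\, \mathrm{d}x'$ is Fréchet differentiable with derivative obtained by differentiating inside the integral. I would establish this through a dominated-convergence argument for difference quotients, where the continuity in condition (1) gives pointwise convergence of the quotients and the uniform derivative bounds in condition (2) provide the dominating function; the existence of $\nabla_\theta \mu_\theta(x)$ and $D_a Q^\mu(x, a)$ then follows from the same continuity-plus-boundedness hypotheses.
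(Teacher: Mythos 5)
Your proposal is correct and follows essentially the same route as the paper's proof: differentiating $V^\mu(x) = Q^\mu(x,\mu_\theta(x))$ through the Bellman recursion with the Fréchet chain rule, regrouping the reward and transition derivatives into $D_a Q^\mu$, unrolling the recursion into the discounted sum $\sum_t \gamma^t p(x \to x', t, \mu_\theta)$, and then integrating against $p_1$ to identify $\rho^\mu$. Your added remarks on dominated convergence and the geometric majorant merely make explicit the interchange-of-limits justification that the paper compresses into its citation of the Leibniz integral rule under the stated regularity conditions.
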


\begin{proof}
The proof follows the deterministic policy gradient algorithms~\citep{DDPG}. We first derive the gradient of the value function as:
\begin{align}
    \nabla_\theta V^{\mu}(x) &= \nabla_\theta Q^{\mu}\left(x, \mu_\theta(x)\right) \label{proof1-line1} \\
     &= \nabla_\theta\left(r\left(x, \mu_\theta(x)\right)+\int_{\mathcal{X}} \gamma p\left(x^{\prime} \mid x, \mu_\theta(x)\right) V^{\mu}\left(x^{\prime}\right) \mathrm{d} x^{\prime}\right), \label{proof1-line2} \\
    &= \left.\nabla_\theta \mu_\theta(x) D_a r(x, a)\right|_{a=\mu_\theta(x)} + \notag\\
    & +\int_{\mathcal{X}} \gamma\left(p\left(x^{\prime} \mid x, \mu_\theta(x)\right) \nabla_\theta V^{\mu}\left(x'\right)+\left.\nabla_\theta \mu_\theta(x) D_a p\left(x' \mid x, a\right)\right|_{a=\mu_\theta(x)} V^{\mu}\left(x'\right)\right) \mathrm{d} x' \label{proof1-line3} \\
    &= \left.\nabla_\theta \mu_\theta(x) D_a\left(r(x, a)+\int_{\mathcal{X}} \gamma p\left(x' \mid x, a\right) V^{\mu}\left(x'\right) \mathrm{d} x'\right)\right|_{a=\mu_\theta(x)} \notag\\
    &+ \int_{\mathcal{X}} \gamma p\left(x^{\prime} \mid x, \mu_\theta(x)\right) \nabla_\theta V^{\mu}\left(x'\right) \mathrm{d} x'\label{proof1-line4} \\
    &= \left.\nabla_\theta \mu_\theta(x) D_a Q^{\mu}(x, a)\right|_{a=\mu_\theta(x)}+\int_{\mathcal{X}} \gamma p\left(x \rightarrow x', 1, \mu_\theta\right) \nabla_\theta V^{\mu}\left(x'\right) \mathrm{d} x'\label{proof1-line5} \\
    &= \left.\int_{\mathcal{X}} \sum_{t=0}^{\infty} \gamma^t p\left(x \rightarrow x', t, \mu_\theta\right) \nabla_\theta \mu_\theta\left(x'\right) D_a Q^{\mu}\left(x', a\right)\right|_{a=\mu_\theta\left(x'\right)} \mathrm{d} x'\label{proof1-line6},
\end{align}
where we apply the definition of the value function in~\Cref{proof1-line1} and~\Cref{proof1-line2}. We apply the chain rule for Fréchet derivative in~\Cref{proof1-line3}. We use the Leibniz integral rule to exchange the order of derivative and integration based on the regularization conditions in~\Cref{proof1-line5} and~\Cref{proof1-line6}. Furthermore, \Cref{proof1-line6} is derived by iterating \Cref{proof1-line5} based on the formula of the value derivative $\nabla_\theta V^{\mu}$.

Now consider the definition of the target in~\Cref{eq-target-j}, we can derive that:
\begin{align}
\nabla_\theta J\left(\mu_\theta\right) & =\nabla_\theta \int_{\mathcal{X}} p_1(x) V^{\mu}(x) dx\label{proof1-line7} \\
& =\int_{\mathcal{X}} p_1(x) \nabla_\theta V^{\mu}(x) dx \label{proof1-line8}\\
& =\left.\int_{\mathcal{X}} \int_{\mathcal{X}} \sum_{t=0}^{\infty} \gamma^t p_1(x) p\left(x \rightarrow x', t, \mu_\theta\right) \nabla_\theta \mu_\theta\left(x'\right) D_a Q^{\mu}\left(x', a\right)\right|_{a=\mu_\theta\left(x'\right)} dx^{\prime} dx \label{proof1-line9}\\
& =\left.\int_{\mathcal{X}} \rho^{\mu}(x) \nabla_\theta \mu_\theta(x) D_a Q^{\mu}(x, a)\right|_{a=\mu_\theta(x)} dx,\label{proof1-line10}
\end{align}
where we leverage again exchange the order of integration and derivative in~\Cref{proof1-line9} and~\Cref{proof1-line10}, and we consider the definition of $\rho^{\mu}$ in the last line.
\end{proof}


\begin{table*}
\resizebox{\columnwidth}{!}{
\def~{\hphantom{0}}
\begin{tabular}{@{}c|cc|cc|cc|cc@{}}
\toprule
$\re_b$                                              & \multicolumn{1}{c|}{3k}                & 3k                & \multicolumn{1}{c|}{3k, 6k, 9k, 15k}   & 12k               & \multicolumn{1}{c|}{3k, 6k, 9k, 12k}   & 15k               & \multicolumn{1}{c|}{3k}                & 6k, 9k, 12k, 15k  \\ \midrule
Phase                                                & \multicolumn{1}{c|}{Training}          & Testing           & \multicolumn{1}{c|}{Training}          & Testing           & \multicolumn{1}{c|}{Training}          & Testing           & \multicolumn{1}{c|}{Training}          & Testing           \\ \midrule
Ours (w/o PC)                                        & \multicolumn{1}{c|}{34.3}          & 33.5          & \multicolumn{1}{c|}{36.0}          & 31.1          & \multicolumn{1}{c|}{38.0}          & 32.3          & \multicolumn{1}{c|}{34.3}          & 15.3          \\ \midrule
Ours (FNO~\cite{FNO} $\rightarrow$ CNN~\citep{MLoppositionControl})               & \multicolumn{1}{c|}{39.8}          & 36.5         & \multicolumn{1}{c|}{40.7}          & 37.9          & \multicolumn{1}{c|}{34.9}          & 31.6         & \multicolumn{1}{c|}{41.8}          & 36.9          \\ \midrule
Ours (FNO~\citep{FNO}$\rightarrow$ RNO~\citep{RNO} ) & \multicolumn{1}{c|}{43.4}          & 42.0          & \multicolumn{1}{c|}{42.9}          & 39.9         & \multicolumn{1}{c|}{39.4}          & 33.8          & \multicolumn{1}{c|}{43.4}          & 40.8        \\ \midrule
Ours (w/o MF~\cite{MFN})                                        & \multicolumn{1}{c|}{43.1}          & 42.0          & \multicolumn{1}{c|}{43.0}          & 39.8         & \multicolumn{1}{c|}{35.8}          & 32.2         & \multicolumn{1}{c|}{43.1}          & 37.1          \\ \midrule
Ours (w/o $L_{pde}$)                                        & \multicolumn{1}{c|}{42.0}          & 39.2          & \multicolumn{1}{c|}{41.8}          & 38.4 & \multicolumn{1}{c|}{36.0}          & 32.9          & \multicolumn{1}{c|}{42.0}          & 38.8          \\ \midrule
\zelinrevision{Ours (pressure $\rightarrow$ shear stresses)}                                                & \multicolumn{1}{c|}{37.1} & 35.8 & \multicolumn{1}{c|}{36.9} & 35.9 & \multicolumn{1}{c|}{37.4} & 33.0 & \multicolumn{1}{c|}{38.8} & 37.7 \\ \midrule \midrule
\zelinrevision{Ours (w/ noise $\frac{1}{SNR}=0.05$)}                                                & \multicolumn{1}{c|}{41.0} & 41.9 & \multicolumn{1}{c|}{39.9} & 38.4 & \multicolumn{1}{c|}{39.6} & 34.9 & \multicolumn{1}{c|}{42.5} & 38.3 \\ \midrule
\zelinrevision{Ours (w/ noise $\frac{1}{SNR}=0.10$)}                                                & \multicolumn{1}{c|}{38.9} & 36.1 & \multicolumn{1}{c|}{36.5} & 35.0 & \multicolumn{1}{c|}{38.8} & 32.8 & \multicolumn{1}{c|}{40.8} & 35.9 \\ \midrule
\zelinrevision{Ours (w/ noise $\frac{1}{SNR}=0.20$)}                                                & \multicolumn{1}{c|}{35.0} & 33.9 & \multicolumn{1}{c|}{31.9} & 30.4 & \multicolumn{1}{c|}{37.6} & 31.9 & \multicolumn{1}{c|}{35.5} & 33.1 \\ \midrule
\midrule \midrule
Ours                                                 & \multicolumn{1}{c|}{\textbf{43.5}} & \textbf{42.1} & \multicolumn{1}{c|}{\textbf{43.1}} & \textbf{40.3} & \multicolumn{1}{c|}{\textbf{40.1}} & \textbf{35.1} & \multicolumn{1}{c|}{\textbf{43.5}} & \textbf{39.0} \\ \bottomrule
\end{tabular}}
\caption{\zelinrevision{(This table has been expanded to incorporate more ablation studies recommended by Reviewer 1.) Ablation studies of~\alg. The metric is the drag reduction rate in the full channel flow of varied Reynolds numbers.}}
\label{tab-ablation-study}
\end{table*}

\subsection{Analysis via the supervised representation learning} 

\begin{figure}
    \centering 
    \includegraphics[width=1.0\textwidth]{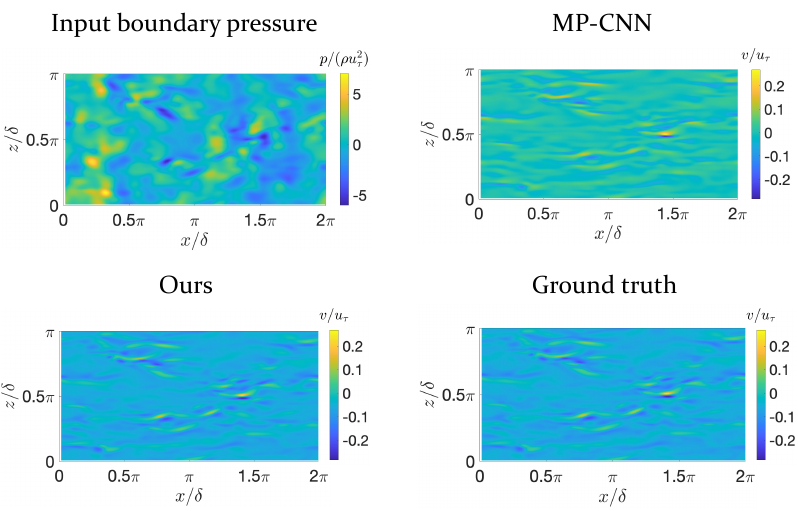}
    \caption{Visualization of input pressure and predicted interior velocity $v_{y^+=10}$ in the full channel scenario.}
    \label{fig-2d-visualization}
\end{figure}

In this subsection, we collect several datasets to train and evaluate observer models, while the target is to measure the fitting performance of each model. All collected datasets use the full channel flow. We change the Reynolds number by altering the kinematic viscosity, as stated in~\Cref{sec-problem-setting}. Those datasets are obtained from the DNS of a turbulent channel flow, with a bulk velocity Reynolds number of $3k$, $6k$, $9k$, $12k$, and $15k$, correspondingly. \zelinrevision{We experiment with four setups with varied Reynolds numbers.} Descriptions of those datasets are presented in the main text. Different from other experiments that do not use the normal velocity $v(y^+=10)$ as a supervision signal, in this section, all models are trained and compared under a supervised learning setup where the normal velocity is the ground truth. Each setup has flow data of three splits: \texttt{train}, \texttt{validation}, and \texttt{test}. The \texttt{training} split is used to optimize machine-learning models, the \texttt{validation} split is used to tune hyper-parameters of models, and the \texttt{test} split is for evaluating machine-learning models. We use $700$ instantaneous fields for the \texttt{training} split, $100$ fields for the \texttt{validation} dataset, and 100 different fields for the \texttt{test} split. If a split contains more than one Reynolds number, then the number of flow data is equal for each Reynolds number. The training dataset size is approximately the same as~\citet{MLoppositionControl} and is enough to train neural models. The collected wall pressure and normal velocity are normalized by their root-mean-square values before being fed into neural models.

We first compare the representation power of~\alg's observer model against another observer of \MPCNN~\cite{MLoppositionControl} under a supervised learning setting, where we call the observer model used in~\alg as \observername. In this supervised learning setup, all models take the wall pressure $p_w$ as input, predicting the normal velocity $v(y^+=10)$ at the detection layer $y^+=10$. Trained models under this supervised learning framework can be further applied to opposition control~\cite{MLoppositionControl}, where details will be given later in this subsection.

We compare to \MPCNN~\cite{MLoppositionControl}, which contains twenty hidden layers, an averaging pooling layer, and a linear layer with residual connections. Zero paddings are used to adjust the sizes of convolutional filters. Zero paddings are applied when the height or width in the input is an odd number. The input and output grid points are $32\times 32$ and $16\times 16$, corresponding to the $x$-axis and $z$-axis. The input and output are aligned in their centers. The weights in the model are initialized by the Xavier method~\cite{xavier}. We do not use the GAN loss~\cite{SRGAN} to train the \MPCNN~\cite{MLoppositionControl} because computing the GAN loss needs another CNN as a discriminator. We experiment with two loss setups to test the performance of \MPCNN~\cite{MLoppositionControl}. The first setup is to only train the model with $L_{data}$ (w/o physics-informed learning), and the other setup is to train the model with $L_{data}$ and $L_{pde}$ (w/ physics-informed learning). The model parameters are optimized with Adam~\cite{kingma2014adam} with an initial learning rate of $1e-3$. No learning rate scheduler is adopted.

In this case, we let the observer model only output the interior velocity in the detection plane $v(y^+=10)$. We use the same size of inputs and outputs as \MPCNN~\cite{MLoppositionControl}. The number of parameters in our observer model is smaller than that of \MPCNN. The optimizer and learning rate remains the same as that of \MPCNN.

\Cref{fig-scatter-plot} shows scatter plots of the prediction and ground truth data, where these diagrams are produced under the last generalization setup. Under this setup, the training dataset uses a Reynolds number of $3k$ (corresponding to the first row of this figure), while testing datasets use varied Reynolds numbers of $6k$ to $15k$. We experiment with two different models by changing training losses. The first setting is to train~\observername\space only with the data loss $L_{data}$ (denoted by ``Ours w/o $L_{pde}$''), while the physics-informed learning is not adopted in this case. The second setting is to train our model with the data loss $L_{data}$ and the PDE loss $L_{pde}$. This full model is denoted by ``Ours''. In this plot, the x-axis denotes the ground truth, and the y-axis denotes the predicted values, where the velocity tensor is flattened into $1D$ before plotting this diagram. Therefore, the distance of each scatter point to the $y=x$ line (which means ground truth equals the prediction) reflects the prediction error. We observe that \MPCNN~\cite{MLoppositionControl} performs well in a bulk-velocity Reynolds number $\re_b=3k$, but it deteriorates significantly in the high Reynolds number scenarios. \observername\space demonstrates superior performance than \MPCNN~\cite{MLoppositionControl}, even without physics-informed learning (the PDE loss $L_{pde}$). The PDE loss can enhance predictions of~\observername, especially when the flow is highly turbulence. This demonstrates that~\observername\space learns better neural features than \MPCNN~\cite{MLoppositionControl}.

\Cref{fig-2d-visualization} provides 2D visualizations of input pressure and output velocity comparing ours against the \MPCNN~\cite{MLoppositionControl} baseline. All models are trained only with $L_{data}$ under a Reynolds number of $3k$ and are tested in an unseen Reynolds number of $6k$. In this figure, predicted velocities are on the test split. Our methods can produce closer predictions to the ground truth, while \MPCNN fails to predict the target velocity in many regions.


\subsection{More ablation studies of~\alg}
\label{sec-model-ablations}
\zelinrevision{In this subsection, we compare various machine-learning-based models and, therefore, provide rationals and insights behind our model choice. The result of this ablation study is presented in~\Cref{tab-ablation-study}. The results demonstrate that each component of our approach contributes to its overall effectiveness, particularly in challenging generalization scenarios.}

\zelinrevision{First, we analyze the role of \textbf{predictive control (PC)} by removing them from the training process. Without PC, the model achieves significantly lower drag reduction, particularly in the challenging settings to higher Reynolds numbers, suggesting that explicit physical priors are crucial for robust generalization. Similarly, we replace FNO~\cite{FNO} with alternative architectures, specifically a \textbf{convolutional neural network (CNN)} and a \textbf{recurrent neural operator(RNO)}~\cite{RNO}. While replacing FNO with RNO yields competitive performance, CNN-based models show considerable degradation, indicating that capturing nonlocal dependencies is essential for accurate flow control.}

\zelinrevision{Next, we investigate the impact of \textbf{multiplicative filters (MF)}, which enables the model to generalize across different Reynolds numbers. Removing MF leads to performance deterioration, particularly in high-Reynolds-number settings, emphasizing its role in learning a scalable control policy. Likewise, the \textbf{physics-informed learning ($L_{pde}$)} proves to be a crucial regularization mechanism, as removing it results in a consistent decline in drag reduction across all settings.}

\zelinrevision{We also explore an alternative formulation where \textbf{shear stresses replace pressure as the primary input feature}. This substitution leads to an overall decrease in performance, suggesting that pressure-based representations contain more informative signals for effective flow control. Furthermore, we examine the model's \textbf{robustness to noisy inputs} by introducing varying levels of Gaussian noise ($\frac{1}{SNR} = 0.05, 0.10, 0.20$). As noise increases, drag reduction performance degrades substantially, highlighting the sensitivity of the learned control policy to input uncertainty.}

\zelinrevision{Finally, the full model consistently outperforms all ablated versions, achieving the highest drag reduction across all generalization settings. These results underscore the necessity of each architectural and algorithmic component in achieving state-of-the-art flow control performance.}

\section{Acknowledgment}
Anima Anandkumar is supported by the Bren named chair professorship, Schmidt AI2050
senior fellowship, and ONR (MURI grant N00014-18-1-2624).

\section{Data Availability}
Source data are provided with this paper.

\section{Code Availability Statement}
The custom code used for the \modelname implementation and turbulent flow control simulations developed in this study is available from our GitHub repository \url{https://github.com/neuraloperator/pde-policylearning}. The code includes the neural operator architectures, training procedures, and evaluations. We provide a README file to explain how to use the code to reproduce our results.

\bibliography{sn-bibliography}


\begin{thebibliography}{43}
\ifx \bisbn   \undefined \def \bisbn  #1{ISBN #1}\fi
\ifx \binits  \undefined \def \binits#1{#1}\fi
\ifx \bauthor  \undefined \def \bauthor#1{#1}\fi
\ifx \batitle  \undefined \def \batitle#1{#1}\fi
\ifx \bjtitle  \undefined \def \bjtitle#1{#1}\fi
\ifx \bvolume  \undefined \def \bvolume#1{\textbf{#1}}\fi
\ifx \byear  \undefined \def \byear#1{#1}\fi
\ifx \bissue  \undefined \def \bissue#1{#1}\fi
\ifx \bfpage  \undefined \def \bfpage#1{#1}\fi
\ifx \blpage  \undefined \def \blpage #1{#1}\fi
\ifx \burl  \undefined \def \burl#1{\textsf{#1}}\fi
\ifx \doiurl  \undefined \def \doiurl#1{\url{https://doi.org/#1}}\fi
\ifx \betal  \undefined \def \betal{\textit{et al.}}\fi
\ifx \binstitute  \undefined \def \binstitute#1{#1}\fi
\ifx \binstitutionaled  \undefined \def \binstitutionaled#1{#1}\fi
\ifx \bctitle  \undefined \def \bctitle#1{#1}\fi
\ifx \beditor  \undefined \def \beditor#1{#1}\fi
\ifx \bpublisher  \undefined \def \bpublisher#1{#1}\fi
\ifx \bbtitle  \undefined \def \bbtitle#1{#1}\fi
\ifx \bedition  \undefined \def \bedition#1{#1}\fi
\ifx \bseriesno  \undefined \def \bseriesno#1{#1}\fi
\ifx \blocation  \undefined \def \blocation#1{#1}\fi
\ifx \bsertitle  \undefined \def \bsertitle#1{#1}\fi
\ifx \bsnm \undefined \def \bsnm#1{#1}\fi
\ifx \bsuffix \undefined \def \bsuffix#1{#1}\fi
\ifx \bparticle \undefined \def \bparticle#1{#1}\fi
\ifx \barticle \undefined \def \barticle#1{#1}\fi
\bibcommenthead
\ifx \bconfdate \undefined \def \bconfdate #1{#1}\fi
\ifx \botherref \undefined \def \botherref #1{#1}\fi
\ifx \url \undefined \def \url#1{\textsf{#1}}\fi
\ifx \bchapter \undefined \def \bchapter#1{#1}\fi
\ifx \bbook \undefined \def \bbook#1{#1}\fi
\ifx \bcomment \undefined \def \bcomment#1{#1}\fi
\ifx \oauthor \undefined \def \oauthor#1{#1}\fi
\ifx \citeauthoryear \undefined \def \citeauthoryear#1{#1}\fi
\ifx \endbibitem  \undefined \def \endbibitem {}\fi
\ifx \bconflocation  \undefined \def \bconflocation#1{#1}\fi
\ifx \arxivurl  \undefined \def \arxivurl#1{\textsf{#1}}\fi
\csname PreBibitemsHook\endcsname

\bibitem[\protect\citeauthoryear{Dutton and Panofsky}{1970}]{dutton1970clearAIRTurbulence}
\begin{barticle}
\bauthor{\bsnm{Dutton}, \binits{J.A.}},
\bauthor{\bsnm{Panofsky}, \binits{H.A.}}:
\batitle{Clear air turbulence: A mystery may be unfolding: High altitude turbulence poses serious problems for aviation and atmospheric science.}
\bjtitle{Science}
\bvolume{167}(\bissue{3920}),
\bfpage{937}--\blpage{944}
(\byear{1970})
\end{barticle}
\endbibitem

\bibitem[\protect\citeauthoryear{McWilliams et~al.}{1997}]{oceanTurbulence}
\begin{barticle}
\bauthor{\bsnm{McWilliams}, \binits{J.C.}},
\bauthor{\bsnm{Sullivan}, \binits{P.P.}},
\bauthor{\bsnm{Moeng}, \binits{C.-H.}}:
\batitle{Langmuir turbulence in the ocean}.
\bjtitle{Journal of Fluid Mechanics}
\bvolume{334},
\bfpage{1}--\blpage{30}
(\byear{1997})
\doiurl{10.1017/S0022112096004375}
\end{barticle}
\endbibitem

\bibitem[\protect\citeauthoryear{Ghalichi et~al.}{1998}]{arterialTurbulence}
\begin{barticle}
\bauthor{\bsnm{Ghalichi}, \binits{F.}},
\bauthor{\bsnm{Deng}, \binits{X.}},
\bauthor{\bsnm{De~Champlain}, \binits{A.}},
\bauthor{\bsnm{Douville}, \binits{Y.}},
\bauthor{\bsnm{King}, \binits{M.}},
\bauthor{\bsnm{Guidoin}, \binits{R.}}:
\batitle{Low {Reynolds} number turbulence modeling of blood flow in arterial stenoses}.
\bjtitle{Biorheology}
\bvolume{35}(\bissue{4-5}),
\bfpage{281}--\blpage{294}
(\byear{1998})
\end{barticle}
\endbibitem

\bibitem[\protect\citeauthoryear{Stein and Sabbah}{1974}]{VeinTurbulence}
\begin{barticle}
\bauthor{\bsnm{Stein}, \binits{P.D.}},
\bauthor{\bsnm{Sabbah}, \binits{H.N.}}:
\batitle{Measured turbulence and its effect on thrombus formation}.
\bjtitle{Circulation Research}
\bvolume{35}(\bissue{4}),
\bfpage{608}--\blpage{614}
(\byear{1974})
\end{barticle}
\endbibitem

\bibitem[\protect\citeauthoryear{Brunton and Noack}{2015}]{ClosedLoopTurbulenceControl}
\begin{barticle}
\bauthor{\bsnm{Brunton}, \binits{S.L.}},
\bauthor{\bsnm{Noack}, \binits{B.R.}}:
\batitle{Closed-loop turbulence control: Progress and challenges}.
\bjtitle{Applied Mechanics Reviews}
\bvolume{67}(\bissue{5}),
\bfpage{050801}
(\byear{2015})
\doiurl{10.1115/1.4031175}
\end{barticle}
\endbibitem

\bibitem[\protect\citeauthoryear{Li et~al.}{2024}]{PINO}
\begin{barticle}
\bauthor{\bsnm{Li}, \binits{Z.}},
\bauthor{\bsnm{Zheng}, \binits{H.}},
\bauthor{\bsnm{Kovachki}, \binits{N.}},
\bauthor{\bsnm{Jin}, \binits{D.}},
\bauthor{\bsnm{Chen}, \binits{H.}},
\bauthor{\bsnm{Liu}, \binits{B.}},
\bauthor{\bsnm{Azizzadenesheli}, \binits{K.}},
\bauthor{\bsnm{Anandkumar}, \binits{A.}}:
\batitle{Physics-informed neural operator for learning partial differential equations}.
\bjtitle{ACM/JMS Journal of Data Science}
\bvolume{1}(\bissue{3}),
\bfpage{1}--\blpage{27}
(\byear{2024})
\end{barticle}
\endbibitem

\bibitem[\protect\citeauthoryear{Li et~al.}{2021}]{FNO}
\begin{bchapter}
\bauthor{\bsnm{Li}, \binits{Z.}},
\bauthor{\bsnm{Kovachki}, \binits{N.}},
\bauthor{\bsnm{Azizzadenesheli}, \binits{K.}},
\bauthor{\bsnm{Bhattacharya}, \binits{K.}},
\bauthor{\bsnm{Stuart}, \binits{A.}},
\bauthor{\bsnm{Anandkumar}, \binits{A.}}:
\bctitle{Fourier neural operator for parametric partial differential equations}.
In: \bbtitle{International Conference on Learning Representations}
(\byear{2021})
\end{bchapter}
\endbibitem

\bibitem[\protect\citeauthoryear{Choi et~al.}{1994}]{OppositionControl}
\begin{barticle}
\bauthor{\bsnm{Choi}, \binits{H.}},
\bauthor{\bsnm{Moin}, \binits{P.}},
\bauthor{\bsnm{Kim}, \binits{J.}}:
\batitle{Active turbulence control for drag reduction in wall-bounded flows}.
\bjtitle{Journal of Fluid Mechanics}
\bvolume{262},
\bfpage{75}--\blpage{110}
(\byear{1994})
\end{barticle}
\endbibitem

\bibitem[\protect\citeauthoryear{Lee et~al.}{1998}]{lee1998suboptimal}
\begin{barticle}
\bauthor{\bsnm{Lee}, \binits{C.}},
\bauthor{\bsnm{Kim}, \binits{J.}},
\bauthor{\bsnm{Choi}, \binits{H.}}:
\batitle{Suboptimal control of turbulent channel flow for drag reduction}.
\bjtitle{Journal of Fluid Mechanics}
\bvolume{358},
\bfpage{245}--\blpage{258}
(\byear{1998})
\end{barticle}
\endbibitem

\bibitem[\protect\citeauthoryear{Bewley et~al.}{2001}]{DNSOptimalBenchmark}
\begin{barticle}
\bauthor{\bsnm{Bewley}, \binits{T.R.}},
\bauthor{\bsnm{Moin}, \binits{P.}},
\bauthor{\bsnm{Temam}, \binits{R.}}:
\batitle{{DNS}-based predictive control of turbulence: an optimal benchmark for feedback algorithms}.
\bjtitle{Journal of Fluid Mechanics}
\bvolume{447},
\bfpage{179}--\blpage{225}
(\byear{2001})
\end{barticle}
\endbibitem

\bibitem[\protect\citeauthoryear{Park and Choi}{2020}]{MLoppositionControl}
\begin{barticle}
\bauthor{\bsnm{Park}, \binits{J.}},
\bauthor{\bsnm{Choi}, \binits{H.}}:
\batitle{Machine-learning-based feedback control for drag reduction in a turbulent channel flow}.
\bjtitle{Journal of Fluid Mechanics}
\bvolume{904},
\bfpage{24}
(\byear{2020})
\end{barticle}
\endbibitem

\bibitem[\protect\citeauthoryear{Ho and Huang}{1982}]{ho1982subharmonics}
\begin{barticle}
\bauthor{\bsnm{Ho}, \binits{C.-M.}},
\bauthor{\bsnm{Huang}, \binits{L.-S.}}:
\batitle{Subharmonics and vortex merging in mixing layers}.
\bjtitle{Journal of Fluid Mechanics}
\bvolume{119},
\bfpage{443}--\blpage{473}
(\byear{1982})
\end{barticle}
\endbibitem

\bibitem[\protect\citeauthoryear{Mettot et~al.}{2014}]{lowFreqUnsteadiness}
\begin{botherref}
\oauthor{\bsnm{Mettot}, \binits{C.}},
\oauthor{\bsnm{Sipp}, \binits{D.}},
\oauthor{\bsnm{B{\'e}zard}, \binits{H.}}:
Quasi-laminar stability and sensitivity analyses for turbulent flows: prediction of low-frequency unsteadiness and passive control.
Physics of Fluids
\textbf{26}(4)
(2014)
\end{botherref}
\endbibitem

\bibitem[\protect\citeauthoryear{Farahmand et~al.}{2017}]{DRLControlPDE}
\begin{bchapter}
\bauthor{\bsnm{Farahmand}, \binits{A.-m.}},
\bauthor{\bsnm{Nabi}, \binits{S.}},
\bauthor{\bsnm{Nikovski}, \binits{D.N.}}:
\bctitle{Deep reinforcement learning for partial differential equation control}.
In: \bbtitle{2017 American Control Conference (ACC)},
pp. \bfpage{3120}--\blpage{3127}
(\byear{2017}).
\doiurl{10.23919/ACC.2017.7963427}
\end{bchapter}
\endbibitem

\bibitem[\protect\citeauthoryear{Lillicrap et~al.}{2016}]{DDPG}
\begin{bchapter}
\bauthor{\bsnm{Lillicrap}, \binits{T.P.}},
\bauthor{\bsnm{Hunt}, \binits{J.J.}},
\bauthor{\bsnm{Pritzel}, \binits{A.}},
\bauthor{\bsnm{Heess}, \binits{N.}},
\bauthor{\bsnm{Erez}, \binits{T.}},
\bauthor{\bsnm{Tassa}, \binits{Y.}},
\bauthor{\bsnm{Silver}, \binits{D.}},
\bauthor{\bsnm{Wierstra}, \binits{D.}}:
\bctitle{Continuous control with deep reinforcement learning}.
In: \beditor{\bsnm{Bengio}, \binits{Y.}},
\beditor{\bsnm{LeCun}, \binits{Y.}} (eds.)
\bbtitle{4th International Conference on Learning Representations, {ICLR} 2016, San Juan, Puerto Rico, May 2-4, 2016, Conference Track Proceedings}
(\byear{2016})
\end{bchapter}
\endbibitem

\bibitem[\protect\citeauthoryear{Sonoda et~al.}{2023}]{sonoda2023reinforcement}
\begin{barticle}
\bauthor{\bsnm{Sonoda}, \binits{T.}},
\bauthor{\bsnm{Liu}, \binits{Z.}},
\bauthor{\bsnm{Itoh}, \binits{T.}},
\bauthor{\bsnm{Hasegawa}, \binits{Y.}}:
\batitle{Reinforcement learning of control strategies for reducing skin friction drag in a fully developed turbulent channel flow}.
\bjtitle{Journal of Fluid Mechanics}
\bvolume{960},
\bfpage{30}
(\byear{2023})
\end{barticle}
\endbibitem

\bibitem[\protect\citeauthoryear{Guastoni et~al.}{2023}]{DeepRLchannel}
\begin{barticle}
\bauthor{\bsnm{Guastoni}, \binits{L.}},
\bauthor{\bsnm{Rabault}, \binits{J.}},
\bauthor{\bsnm{Schlatter}, \binits{P.}},
\bauthor{\bsnm{Azizpour}, \binits{H.}},
\bauthor{\bsnm{Vinuesa}, \binits{R.}}:
\batitle{Deep reinforcement learning for turbulent drag reduction in channel flows}.
\bjtitle{The European Physical Journal E}
\bvolume{46}(\bissue{4}),
\bfpage{27}
(\byear{2023})
\end{barticle}
\endbibitem

\bibitem[\protect\citeauthoryear{Rabault et~al.}{2019}]{DRLFlow}
\begin{barticle}
\bauthor{\bsnm{Rabault}, \binits{J.}},
\bauthor{\bsnm{Kuchta}, \binits{M.}},
\bauthor{\bsnm{Jensen}, \binits{A.}},
\bauthor{\bsnm{R{\'e}glade}, \binits{U.}},
\bauthor{\bsnm{Cerardi}, \binits{N.}}:
\batitle{Artificial neural networks trained through deep reinforcement learning discover control strategies for active flow control}.
\bjtitle{Journal of Fluid Mechanics}
\bvolume{865},
\bfpage{281}--\blpage{302}
(\byear{2019})
\end{barticle}
\endbibitem

\bibitem[\protect\citeauthoryear{Fan et~al.}{2020}]{RLBluffBodySim}
\begin{barticle}
\bauthor{\bsnm{Fan}, \binits{D.}},
\bauthor{\bsnm{Yang}, \binits{L.}},
\bauthor{\bsnm{Wang}, \binits{Z.}},
\bauthor{\bsnm{Triantafyllou}, \binits{M.S.}},
\bauthor{\bsnm{Karniadakis}, \binits{G.E.}}:
\batitle{Reinforcement learning for bluff body active flow control in experiments and simulations}.
\bjtitle{Proceedings of the National Academy of Sciences}
\bvolume{117}(\bissue{42}),
\bfpage{26091}--\blpage{26098}
(\byear{2020})
\end{barticle}
\endbibitem

\bibitem[\protect\citeauthoryear{Tang et~al.}{2020}]{tang2020robustReynolds}
\begin{botherref}
\oauthor{\bsnm{Tang}, \binits{H.}},
\oauthor{\bsnm{Rabault}, \binits{J.}},
\oauthor{\bsnm{Kuhnle}, \binits{A.}},
\oauthor{\bsnm{Wang}, \binits{Y.}},
\oauthor{\bsnm{Wang}, \binits{T.}}:
Robust active flow control over a range of {Reynolds} numbers using an artificial neural network trained through deep reinforcement learning.
Physics of Fluids
\textbf{32}(5)
(2020)
\end{botherref}
\endbibitem

\bibitem[\protect\citeauthoryear{Chatzimanolakis et~al.}{2024}]{LearnTwoDControl}
\begin{barticle}
\bauthor{\bsnm{Chatzimanolakis}, \binits{M.}},
\bauthor{\bsnm{Weber}, \binits{P.}},
\bauthor{\bsnm{Koumoutsakos}, \binits{P.}}:
\batitle{Learning in two dimensions and controlling in three: Generalizable drag reduction strategies for flows past circular cylinders through deep reinforcement learning}.
\bjtitle{Phys. Rev. Fluids}
\bvolume{9},
\bfpage{043902}
(\byear{2024})
\doiurl{10.1103/PhysRevFluids.9.043902}
\end{barticle}
\endbibitem

\bibitem[\protect\citeauthoryear{Lale et~al.}{2021}]{lale2021model}
\begin{bchapter}
\bauthor{\bsnm{Lale}, \binits{S.}},
\bauthor{\bsnm{Azizzadenesheli}, \binits{K.}},
\bauthor{\bsnm{Hassibi}, \binits{B.}},
\bauthor{\bsnm{Anandkumar}, \binits{A.}}:
\bctitle{Model learning predictive control in nonlinear dynamical systems}.
In: \bbtitle{2021 60th IEEE Conference on Decision and Control (CDC)},
pp. \bfpage{757}--\blpage{762}
(\byear{2021}).
\bcomment{IEEE}
\end{bchapter}
\endbibitem

\bibitem[\protect\citeauthoryear{Mnih et~al.}{2015}]{dqn}
\begin{barticle}
\bauthor{\bsnm{Mnih}, \binits{V.}},
\bauthor{\bsnm{Kavukcuoglu}, \binits{K.}},
\bauthor{\bsnm{Silver}, \binits{D.}},
\bauthor{\bsnm{Rusu}, \binits{A.A.}},
\bauthor{\bsnm{Veness}, \binits{J.}},
\bauthor{\bsnm{Bellemare}, \binits{M.G.}},
\bauthor{\bsnm{Graves}, \binits{A.}},
\bauthor{\bsnm{Riedmiller}, \binits{M.}},
\bauthor{\bsnm{Fidjeland}, \binits{A.K.}},
\bauthor{\bsnm{Ostrovski}, \binits{G.}}, \betal:
\batitle{Human-level control through deep reinforcement learning}.
\bjtitle{Nature}
\bvolume{518}(\bissue{7540}),
\bfpage{529}--\blpage{533}
(\byear{2015})
\end{barticle}
\endbibitem

\bibitem[\protect\citeauthoryear{Azizzadenesheli et~al.}{2024}]{azizzadenesheli2024neural}
\begin{barticle}
\bauthor{\bsnm{Azizzadenesheli}, \binits{K.}},
\bauthor{\bsnm{Kovachki}, \binits{N.}},
\bauthor{\bsnm{Li}, \binits{Z.}},
\bauthor{\bsnm{Liu-Schiaffini}, \binits{M.}},
\bauthor{\bsnm{Kossaifi}, \binits{J.}},
\bauthor{\bsnm{Anandkumar}, \binits{A.}}:
\batitle{Neural operators for accelerating scientific simulations and design}.
\bjtitle{Nature Reviews Physics}
\bvolume{6}(\bissue{5}),
\bfpage{320}--\blpage{328}
(\byear{2024})
\end{barticle}
\endbibitem

\bibitem[\protect\citeauthoryear{Bae et~al.}{2021}]{bae2021nonlinear}
\begin{barticle}
\bauthor{\bsnm{Bae}, \binits{H.J.}},
\bauthor{\bsnm{Lozano-Duran}, \binits{A.}},
\bauthor{\bsnm{McKeon}, \binits{B.J.}}:
\batitle{Nonlinear mechanism of the self-sustaining process in the buffer and logarithmic layer of wall-bounded flows}.
\bjtitle{Journal of Fluid Mechanics}
\bvolume{914},
\bfpage{3}
(\byear{2021})
\end{barticle}
\endbibitem

\bibitem[\protect\citeauthoryear{Fukami et~al.}{2024}]{fukami2024data}
\begin{barticle}
\bauthor{\bsnm{Fukami}, \binits{K.}},
\bauthor{\bsnm{Goto}, \binits{S.}},
\bauthor{\bsnm{Taira}, \binits{K.}}:
\batitle{Data-driven nonlinear turbulent flow scaling with buckingham pi variables}.
\bjtitle{Journal of Fluid Mechanics}
\bvolume{984},
\bfpage{4}
(\byear{2024})
\end{barticle}
\endbibitem

\bibitem[\protect\citeauthoryear{Kovachki et~al.}{2023}]{neuralOperator}
\begin{barticle}
\bauthor{\bsnm{Kovachki}, \binits{N.}},
\bauthor{\bsnm{Li}, \binits{Z.}},
\bauthor{\bsnm{Liu}, \binits{B.}},
\bauthor{\bsnm{Azizzadenesheli}, \binits{K.}},
\bauthor{\bsnm{Bhattacharya}, \binits{K.}},
\bauthor{\bsnm{Stuart}, \binits{A.}},
\bauthor{\bsnm{Anandkumar}, \binits{A.}}:
\batitle{Neural operator: Learning maps between function spaces with applications to pdes}.
\bjtitle{Journal of Machine Learning Research}
\bvolume{24}(\bissue{89}),
\bfpage{1}--\blpage{97}
(\byear{2023})
\end{barticle}
\endbibitem

\bibitem[\protect\citeauthoryear{Jim{\'e}nez}{2018}]{jimenez2018coherent}
\begin{barticle}
\bauthor{\bsnm{Jim{\'e}nez}, \binits{J.}}:
\batitle{Coherent structures in wall-bounded turbulence}.
\bjtitle{Journal of Fluid Mechanics}
\bvolume{842},
\bfpage{1}
(\byear{2018})
\end{barticle}
\endbibitem

\bibitem[\protect\citeauthoryear{Blackburn et~al.}{1996}]{blackburn1996topology}
\begin{barticle}
\bauthor{\bsnm{Blackburn}, \binits{H.M.}},
\bauthor{\bsnm{Mansour}, \binits{N.N.}},
\bauthor{\bsnm{Cantwell}, \binits{B.J.}}:
\batitle{Topology of fine-scale motions in turbulent channel flow}.
\bjtitle{Journal of Fluid Mechanics}
\bvolume{310},
\bfpage{269}--\blpage{292}
(\byear{1996})
\end{barticle}
\endbibitem

\bibitem[\protect\citeauthoryear{Hammond et~al.}{1998}]{hammond1998observed}
\begin{barticle}
\bauthor{\bsnm{Hammond}, \binits{E.P.}},
\bauthor{\bsnm{Bewley}, \binits{T.R.}},
\bauthor{\bsnm{Moin}, \binits{P.}}:
\batitle{Observed mechanisms for turbulence attenuation and enhancement in opposition-controlled wall-bounded flows}.
\bjtitle{Physics of Fluids}
\bvolume{10}(\bissue{9}),
\bfpage{2421}--\blpage{2423}
(\byear{1998})
\end{barticle}
\endbibitem

\bibitem[\protect\citeauthoryear{Jeong and Hussain}{1995}]{vortexIdentification}
\begin{barticle}
\bauthor{\bsnm{Jeong}, \binits{J.}},
\bauthor{\bsnm{Hussain}, \binits{F.}}:
\batitle{On the identification of a vortex}.
\bjtitle{Journal of Fluid Mechanics}
\bvolume{285},
\bfpage{69}--\blpage{94}
(\byear{1995})
\doiurl{10.1017/S0022112095000462}
\end{barticle}
\endbibitem

\bibitem[\protect\citeauthoryear{Hunt et~al.}{1988}]{hunt1988eddies}
\begin{botherref}
\oauthor{\bsnm{Hunt}, \binits{J.C.}},
\oauthor{\bsnm{Wray}, \binits{A.A.}},
\oauthor{\bsnm{Moin}, \binits{P.}}:
Eddies, streams, and convergence zones in turbulent flows.
Center for Turbulence Research, Proceedings of the Summer Program
(1988)
\end{botherref}
\endbibitem

\bibitem[\protect\citeauthoryear{Wray}{1990}]{wray1990minimal}
\begin{botherref}
\oauthor{\bsnm{Wray}, \binits{A.A.}}:
Minimal storage time advancement schemes for spectral methods.
NASA Ames Research Center, California, Report No. MS
\textbf{202}
(1990)
\end{botherref}
\endbibitem

\bibitem[\protect\citeauthoryear{Chorin}{1968}]{chorin1968numerical}
\begin{barticle}
\bauthor{\bsnm{Chorin}, \binits{A.J.}}:
\batitle{Numerical solution of the {Navier}--{Stokes} equations}.
\bjtitle{Mathematics of Computation}
\bvolume{22}(\bissue{104}),
\bfpage{745}--\blpage{762}
(\byear{1968})
\end{barticle}
\endbibitem

\bibitem[\protect\citeauthoryear{Bae et~al.}{2018}]{bae_2018}
\begin{barticle}
\bauthor{\bsnm{Bae}, \binits{H.J.}},
\bauthor{\bsnm{Lozano-Dur\'an}, \binits{A.}},
\bauthor{\bsnm{Bose}, \binits{S.T.}},
\bauthor{\bsnm{Moin}, \binits{P.}}:
\batitle{Turbulence intensities in large-eddy simulation of wall-bounded flows}.
\bjtitle{Phys. Rev. Fluids}
\bvolume{3},
\bfpage{014610}
(\byear{2018})
\doiurl{10.1103/PhysRevFluids.3.014610}
\end{barticle}
\endbibitem

\bibitem[\protect\citeauthoryear{Bae et~al.}{2019}]{bae_2019}
\begin{barticle}
\bauthor{\bsnm{Bae}, \binits{H.J.}},
\bauthor{\bsnm{Lozano-Durán}, \binits{A.}},
\bauthor{\bsnm{Bose}, \binits{S.T.}},
\bauthor{\bsnm{Moin}, \binits{P.}}:
\batitle{Dynamic slip wall model for large-eddy simulation}.
\bjtitle{Journal of Fluid Mechanics}
\bvolume{859},
\bfpage{400}--\blpage{432}
(\byear{2019})
\doiurl{10.1017/jfm.2018.838}
\end{barticle}
\endbibitem

\bibitem[\protect\citeauthoryear{Gokarn et~al.}{2008}]{LargeEddy}
\begin{barticle}
\bauthor{\bsnm{Gokarn}, \binits{A.}},
\bauthor{\bsnm{Battaglia}, \binits{F.}},
\bauthor{\bsnm{Fox}, \binits{R.}},
\bauthor{\bsnm{Hill}, \binits{J.}},
\bauthor{\bsnm{Reveillon}, \binits{J.}}:
\batitle{Large eddy simulations of incompressible turbulent flows using parallel computing techniques}.
\bjtitle{International Journal for Numerical Methods in Fluids}
\bvolume{56}(\bissue{10}),
\bfpage{1819}--\blpage{1843}
(\byear{2008})
\end{barticle}
\endbibitem

\bibitem[\protect\citeauthoryear{Fathony et~al.}{2020}]{MFN}
\begin{bchapter}
\bauthor{\bsnm{Fathony}, \binits{R.}},
\bauthor{\bsnm{Sahu}, \binits{A.K.}},
\bauthor{\bsnm{Willmott}, \binits{D.}},
\bauthor{\bsnm{Kolter}, \binits{J.Z.}}:
\bctitle{Multiplicative filter networks}.
In: \bbtitle{International Conference on Learning Representations}
(\byear{2020})
\end{bchapter}
\endbibitem

\bibitem[\protect\citeauthoryear{Raissi et~al.}{2019}]{PINN}
\begin{barticle}
\bauthor{\bsnm{Raissi}, \binits{M.}},
\bauthor{\bsnm{Perdikaris}, \binits{P.}},
\bauthor{\bsnm{Karniadakis}, \binits{G.E.}}:
\batitle{Physics-informed neural networks: A deep learning framework for solving forward and inverse problems involving nonlinear partial differential equations}.
\bjtitle{Journal of Computational Physics}
\bvolume{378},
\bfpage{686}--\blpage{707}
(\byear{2019})
\end{barticle}
\endbibitem

\bibitem[\protect\citeauthoryear{Liu-Schiaffini et~al.}{2023}]{RNO}
\begin{botherref}
\oauthor{\bsnm{Liu-Schiaffini}, \binits{M.}},
\oauthor{\bsnm{Singer}, \binits{C.E.}},
\oauthor{\bsnm{Kovachki}, \binits{N.}},
\oauthor{\bsnm{Schneider}, \binits{T.}},
\oauthor{\bsnm{Azizzadenesheli}, \binits{K.}},
\oauthor{\bsnm{Anandkumar}, \binits{A.}}:
Tipping Point Forecasting in Non-Stationary Dynamics on Function Spaces
(2023)
\end{botherref}
\endbibitem

\bibitem[\protect\citeauthoryear{Glorot and Bengio}{2010}]{xavier}
\begin{bchapter}
\bauthor{\bsnm{Glorot}, \binits{X.}},
\bauthor{\bsnm{Bengio}, \binits{Y.}}:
\bctitle{Understanding the difficulty of training deep feedforward neural networks}.
In: \bbtitle{Proceedings of the Thirteenth International Conference on Artificial Intelligence and Statistics},
pp. \bfpage{249}--\blpage{256}
(\byear{2010}).
\bcomment{JMLR Workshop and Conference Proceedings}
\end{bchapter}
\endbibitem

\bibitem[\protect\citeauthoryear{G{\"u}emes et~al.}{2021}]{SRGAN}
\begin{botherref}
\oauthor{\bsnm{G{\"u}emes}, \binits{A.}},
\oauthor{\bsnm{Discetti}, \binits{S.}},
\oauthor{\bsnm{Ianiro}, \binits{A.}},
\oauthor{\bsnm{Sirmacek}, \binits{B.}},
\oauthor{\bsnm{Azizpour}, \binits{H.}},
\oauthor{\bsnm{Vinuesa}, \binits{R.}}:
From coarse wall measurements to turbulent velocity fields through deep learning.
Physics of Fluids
\textbf{33}(7)
(2021)
\end{botherref}
\endbibitem

\bibitem[\protect\citeauthoryear{Kingma and Ba}{2015}]{kingma2014adam}
\begin{bchapter}
\bauthor{\bsnm{Kingma}, \binits{D.P.}},
\bauthor{\bsnm{Ba}, \binits{J.}}:
\bctitle{Adam: {A} method for stochastic optimization}.
In: \beditor{\bsnm{Bengio}, \binits{Y.}},
\beditor{\bsnm{LeCun}, \binits{Y.}} (eds.)
\bbtitle{3rd International Conference on Learning Representations, {ICLR} 2015, San Diego, CA, USA, May 7-9, 2015, Conference Track Proceedings}
(\byear{2015})
\end{bchapter}
\endbibitem

\end{thebibliography}

\end{document}